\documentclass{article}
\usepackage[letterpaper,margin=1.0in]{geometry}
\usepackage{enumitem}
\usepackage[utf8]{inputenc} 
\usepackage[T1]{fontenc}    
\usepackage{hyperref}       
\usepackage{url}            
\usepackage{booktabs}       
\usepackage{amsfonts}       
\usepackage{nicefrac}       
\usepackage{microtype}      
\usepackage{graphicx}       
\usepackage[numbers,sort]{natbib}
\usepackage{multirow}
\usepackage{bbding}
\graphicspath{{media/}}     

\usepackage{thmtools}
\usepackage{thm-restate}

\usepackage{amsmath}
\usepackage{amsthm}
\usepackage{amssymb}
\usepackage{url}
\usepackage[capitalize,noabbrev]{cleveref}
\crefname{appendix}{Appendix}{Appendices}
\Crefname{appendix}{Appendix}{Appendices}
\crefname{app}{Appendix}{Appendices}
\Crefname{app}{Appendix}{Appendices}
\crefname{subappendix}{Appendix}{Appendices}
\Crefname{subappendix}{Appendix}{Appendices}
\usepackage{enumitem}
\newtheorem{definition}{Definition}
\newtheorem{theorem}{Theorem}
\newtheorem{assumption}{Assumption}

\newtheorem{remark}{Remark}
\usepackage{enumitem}
\usepackage{graphicx}
\usepackage{caption}  
\usepackage{subcaption}  
\usepackage{adjustbox}  

\usepackage{xcolor}
\usepackage[most]{tcolorbox}
\newif\ifcomments
\commentstrue

\newif\ificml
\icmlfalse

\newif\ifneurips
\neuripsfalse

\title{When Does Online Imitation Learning Help in LLM Post-Training? \\
The Role of (Non-)Realizability Beyond Horizon}

\author{
Huaqing Zhang\textsuperscript{*} \\
Tsinghua University \\
\texttt{zhanghq22@mails.tsinghua.edu.cn}
\and
Jingchu Gai\textsuperscript{*} \\
Carnegie Mellon University \\
\texttt{jgai@cs.cmu.edu}
\and
Juno Kim \\
UC Berkeley \\
\texttt{junokim@berkeley.edu}
\and
Bingbin Liu \\
Harvard University \\
\texttt{bliu@g.harvard.edu}
\and
Andrej Risteski \\
Carnegie Mellon University \\
\texttt{aristesk@cs.cmu.edu}
}

\date{}  

\usepackage{mathtools}
\usepackage[most]{tcolorbox}
\usepackage{longtable}
\usepackage{booktabs}  
\usepackage{caption}
\tcbuselibrary{breakable,skins}

\newtcolorbox{responsebox}[2][]{
    breakable,
    enhanced,
    colback=cyan!5,        
    colframe=cyan!50!blue, 
    coltext=black,
    coltitle=white,
    fonttitle=\bfseries\rmfamily,
    arc=1mm,
    boxrule=1pt,
    width=0.95\linewidth,
    center,
    title=#2,
    #1
}

\include{defs}
\def\shownotes{1}  
\ifnum\shownotes=1
\newcommand{\authnote}[2]{{$\ll$\textsf{\footnotesize #1 notes: #2}$\gg$}}
\else
\newcommand{\authnote}[2]{}
\fi

\definecolor{jcg}{RGB}{100,160,0}

\def\eqref#1{equation~\ref{#1}}

\def\1{\bm{1}}

\DeclareMathAlphabet{\mathsfit}{\encodingdefault}{\sfdefault}{m}{sl}
\SetMathAlphabet{\mathsfit}{bold}{\encodingdefault}{\sfdefault}{bx}{n}

\newcommand{\E}{\mathbb{E}}

\newcommand{\R}{\mathbb{R}}

\newcommand{\KL}[2]{D_{\mathrm{KL}}\left({#1}\|{#2}\right)}
\newcommand{\Var}{\mathrm{Var}}

\DeclareMathOperator*{\argmax}{arg\,max}

\def \TV {\text{TV}}

\def \x {x}
\def \y {y}
\def \xdist{\mathcal D_{x}}
\def \xspace{\mathcal X}
\def \yspace{\mathcal Y}
\def \policy {\pi}
\def \policyclass {\Pi}
\def \policyhat {\hat \policy}
\def \expolicy {\policy ^{\mathrm{e}}}

\def \basepolicy {\policy ^{\mathrm{base}}}
\def \r {r}
\def \Jon {J_{\mathrm{on}}}
\def \hatJon {\widehat J_{\mathrm{on}}}
\def \vmax {V_{\mathrm{max}}}
\def \Zi {Z_i^\policy}

\def \val {V}
\def \dTV {D_{\TV}}             
\def \excoverage {C^{\mathrm{e}}_\infty}
\def \basecoveragestar {C^{\mathrm{base}}_\infty}
\def \basecoverage {C^{\mathrm{base}}}
\def \optimalpolicy {\pi^\star}

\def \cbinstance {\mathcal C}

\begin{document}
\maketitle
\begin{abstract}

Online imitation learning (IL), particularly on-policy distillation, has emerged as a strong LLM post-training approach, often outperforming offline supervised fine-tuning (SFT). Yet a principled understanding of when and why online interaction helps remains unclear. 
In this work, we challenge the view that error accumulation is the main source of online IL's advantage, and instead show that the benefits of online interaction depend critically on whether the setting is realizable, i.e., whether the student policy class can represent the expert policy.  
Under realizability, we empirically find that offline IL already matches expert performance.
In contrast, in non-realizable (misspecified) settings, we prove that offline IL encounters an information-theoretic bottleneck even when horizon $H=1$, and propose a structural characterization of misspecification relative to the reward, under which online IL provably achieves high performance despite a large distributional mismatch between the expert and student policies.
\end{abstract}

\begingroup
\renewcommand{\thefootnote}{*}
\footnotetext{Equal contribution.}
\endgroup

\section{Introduction}


Imitation learning (IL) trains a student policy from expert demonstrations, enabling the student to achieve strong performance without access to explicit reward signals.
For post-training of large language models (LLMs), the canonical offline imitation learning method is \textit{supervised fine-tuning} (SFT), where the learner imitates an expert policy using a fixed dataset of expert demonstrations, such as human-written responses or traces generated by a stronger model. However, such offline approaches are reported to suffer from limitations such as poor generalization \citep{chu2025sft, arnav2024falsepromise, kumar2022distortpretrainfeatures} and severe catastrophic forgetting \citep{chen2025retaining}. 

Online imitation learning offers a promising alternative for mitigating these issues in post-training.
In the online IL setting, the learner actively interacts with the environment and queries an expert to obtain feedback \cite{ross2011dagger}. A prominent online IL approach for LLM post-training is \emph{on-policy distillation}, where the student samples from its current policy and minimizes the reverse-KL divergence to the expert distribution \cite{kim2016sequence,agarwal2024policy,gu2023minillm}. Although recent studies report the empirical superiority of online IL in LLM post-training \citep{yang2025qwen3, lu2025onpolicydistillation}, a principled understanding of when and why this advantage arises remains limited.

A classical explanation for the advantage of online IL is its ability to mitigate error accumulation: offline IL can suffer from errors that compound quadratically over the horizon, whereas DAgger-style online algorithms improve this dependence under a recoverability assumption \citep{ross2010efficient,ross2011dagger}.
However, recent theory by \citet{foster2024behaviorcloning} shows that under the standard log-likelihood objective used in SFT, offline IL can match the sample-complexity guarantees of DAgger in \emph{realizable} settings, where the expert policy is contained in the student policy class. This result calls into question whether error accumulation alone can fully explain the practical advantage of online IL in LLM training.

In this work, we identify \textit{non-realizability} (or \textit{misspecification}) as an orthogonal source of online IL's advantage: the student policy class is too restricted, e.g., due to a much smaller model size, to represent the expert policy. Unlike the classical error-accumulation view, which concerns long-horizon compounding, this advantage can already arise in contextual bandits with horizon $H=1$.

We first study the realizable setting as a sanity check, and empirically show that offline IL already suffices in this setting (\Cref{sec:compare_realizable}).
Specifically, we consider both synthetic (Countdown \citep{tinyzero}) and real-world math reasoning (GSM8K \citep{cobbe2021training} and DeepScaleR \citep{luo2025deepscaler}) tasks. 
For each task we first apply reinforcement learning (RL) to a base model to obtain the expert, and then compare SFT and on-policy distillation initialized from the same base model.
By construction, this setup guarantees realizability, since the expert policy is itself obtained within the same model family. 
Our results show that the SFT-trained student matches the expert's performance, and on-policy distillation offers no further gains in either accuracy or training speed.
This finding is consistent with the theory of \citet{foster2024behaviorcloning}, which predicts no statistical-complexity gap between offline and online IL in realizable settings, and further indicates that other potential sources of online IL advantage, such as optimization effects or problem-specific structures, do not yield an empirical advantage in this regime either.
Taken together, these results further motivate our focus on the \textit{non-realizable} setting as the key regime for understanding the benefits of online imitation learning.

In non-realizable settings, we develop an account of online IL's advantage that is orthogonal to the classical error-accumulation argument. 
We focus on a simplified yet informative contextual-bandit framework with horizon $H=1$, where long-horizon compounding is absent by construction, yet online IL still gains an advantage from misspecification.
We first show, both theoretically and empirically, that prior analyses that control the performance gap via distributional discrepancies between the expert and student policies \citep{foster2024behaviorcloning, rohatgi2025computational, rajaraman2020limitil} can be overly pessimistic under misspecification, and reward-dependent analyses are necessary.
For offline IL, we identify an information-theoretic limitation specific to non-realizable settings: the expert policy may place too little probability mass on responses that the restricted student class can learn.
For online IL, we further introduce a structural condition that explicitly captures the misspecification structure relative to the reward (\Cref{ass:main}), 
under which online IL can provably achieve small performance suboptimality despite large distributional discrepancy. At a high level, the condition states that the expert's signal should either be aligned with the actual reward, or deviate from it only in ways that the restricted student class cannot realize.


To summarize, our main contribution is to identify non-realizability as a key factor underlying the advantage of online over offline imitation learning in language model post-training. Specifically, 

\begin{enumerate}[leftmargin=*]
    \item In realizable settings, we empirically verify that offline IL (SFT) fully recovers expert performance, and that online IL (on-policy distillation) yields neither accuracy gains nor faster learning (\cref{sec:compare_realizable}).
    \item In non-realizable settings, we show that prior discrepancy-based analyses can be overly pessimistic (\cref{sec:distance_insufficient}), and reward-dependent analyses are necessary to explain the success of IL.
    \item By analyzing the reward-dependent structure of misspecification, we identify an information-theoretic limitation of offline IL in this regime (\cref{sec:offlinelimitation}) and propose a structural condition characterizing when online IL can remain effective despite misspecification (\cref{sec:onlineefficient}).
\end{enumerate}

\ifneurips
\vspace{-2mm}
\fi
\section{Preliminaries}
\label{sec:preliminary}
\ifneurips
\vspace{-2mm}
\fi

\ifneurips
For a set $\mathcal X$, let $\Delta(\mathcal X)$ denote the set of probability distributions over $\mathcal X$.
For distributions $P,Q\in\Delta(\mathcal X)$, let
$\KL{P}{Q}\coloneqq \mathbb E_{x\sim P}\!\left[\log \frac{P(x)}{Q(x)}\right]$
and
$\dTV(P,Q)\coloneqq \frac12\sum_{x\in\mathcal X}|P(x)-Q(x)|$.
\else
For a set $\mathcal X$, let $\Delta(\mathcal X)$ denote the set of all probability distributions over~$\mathcal X$. For two distributions $P,Q$ over~$\mathcal X$, their Kullback--Leibler divergence is defined by $\KL {P}{Q} \coloneqq \mathbb E_{x\sim P}\!\left[\log \frac{P(x)}{Q(x)}\right]$, and their total variation distance is defined by $\dTV(P,Q) \coloneqq \sup_{A\subseteq \mathcal X} |P(A)-Q(A)|=\frac{1}{2}\sum_{x\in\mathcal X}|P(x)-Q(x)|$. We use standard asymptotic notation throughout, and we write $a \lesssim b$ as shorthand for $a = O(b)$, and $a \gtrsim b$ as shorthand for $a = \Omega(b)$.
\fi
\ifneurips
\else
\subsection{Problem Setting}
\fi
To provide a clean theoretical sandbox, we abstract language model post-training as a contextual bandit problem~\citep{rafailov2023dpo, xiong2023iterative, foster2025good}.
Let $\mathcal{X}$ represent the set of all possible prompts (contexts) and $\mathcal{Y}$ the set of generated responses (actions). A policy $\policy: \xspace \to \Delta(\yspace) $ specifies a conditional distribution $\policy(y \mid x)$ over responses~$y$ given a prompt~$x$.
Let  $\r(\x,\y):\xspace \times \yspace \to [0,1]$ be a reward function. We aim to find a policy~$\pi$ within a policy class~$\policyclass$ that achieves high expected reward
$\val(\policy )\coloneqq \E_{\x\sim\xdist} \E_{\y\sim \policy(\cdot \mid \x )} \r(\x,\y)$, where $\xdist$ is the distribution over prompts. 
We follow standard convention in RL theory and assume the policy class $|\policyclass|<\infty$ is finite \citep{foster2023foundations, foster2024behaviorcloning, rohatgi2025computational}. 
\ifneurips
\else
This finite-class abstraction provides a clean testbed for studying statistical generalization without imposing stronger structural assumptions, such as linearity, that need not hold for neural networks. \citep{foster2024behaviorcloning}. 
\fi
\ifneurips
\else
\paragraph{Realizability.} 
\fi
\ifneurips
Realizability is a widely studied assumption in RL theory that asks whether the student policy class is expressive enough to represent the expert policy. 
\else

Realizability is a widely studied assumption in RL theory that asks whether the student policy class is expressive enough to represent the expert policy. 
In this work, we identify realizability as a crucial factor governing the advantage (or lack thereof) of online IL over offline IL.  
\fi



\ifneurips
\begin{definition}[Realizability and Misspecification]
We say the expert policy $\expolicy$ is \textit{realizable} with respect to the student policy class $\Pi$ if $\expolicy \in \policyclass$.
Otherwise, the setting is \textit{non-realizable} (\textit{misspecified}).
\end{definition}
\else
\begin{definition}[Realizability and Misspecification]
We say that the expert policy $\expolicy$ is \textit{realizable} with respect to the student policy class $\Pi$ if $\expolicy \in \policyclass$. Otherwise if $\expolicy \notin \policyclass$, the setting is \textit{non-realizable} (or \textit{misspecified}).
\end{definition}
\fi

\ifneurips
\else
\subsection{Imitation Learning}
\label{sec:pre-il}
\fi

Imitation learning is a learning paradigm in which the learner improves its policy by leveraging information provided by an expert policy $\expolicy$, without direct access to the reward feedback during training.
In this work, we study two settings: offline and online imitation learning.

\ifneurips
\textbf{Offline Imitation Learning.}
The learner is given a fixed dataset $\mathcal{D}=\{(x_i,y_i)\}_{i=1}^n$ of i.i.d.\ expert demonstrations, where $x_i\sim\xdist$ and $y_i\sim\expolicy(\cdot\mid x_i)$.
The standard approach in this setting is \textbf{behavior cloning}, which reduces IL to minimizing a supervised learning objective:
{
\setlength{\abovedisplayskip}{3pt}
\setlength{\belowdisplayskip}{3pt}
\begin{align}
\label{eq:offlineil}
\textstyle L_{\mathrm{BC}}(\pi)\coloneqq 
\E_{x\sim\xdist}\,\E_{y\sim\expolicy(\cdot\mid x)}
\big[\ell(\pi(\cdot\mid x),y)\big],
\end{align}
}
where $\ell:\Delta(\mathcal{Y})\times\mathcal{Y}\to\mathbb{R}$ is a loss function.
A standard instantiation in LLM post-training uses the negative log-likelihood loss $\ell(\pi(\cdot\mid x),y) = -\log \pi(y\mid x)$,
which corresponds to the \textbf{SFT} objective. 
\else
\paragraph{Offline Imitation Learning.}
The learner is given a static dataset $\mathcal{D}=\{(x_i,y_i)\}_{i=1}^n$ of i.i.d.\ expert demonstrations, where $x_i\sim\xdist$ and $y_i\sim\expolicy(\cdot\mid x_i)$.
The standard approach in this setting is \textbf{behavior cloning} (BC), which converts IL to a supervised learning problem with the following objective:
\begin{definition}[Population Objective of Behavior Cloning]
    Given an expert policy $\expolicy$ and a loss function $\ell: \Delta(\mathcal{Y})\times\mathcal{Y}\to\mathbb{R}$, the population behavior cloning objective is defined as
    \begin{align}
    \vspace{-6mm}
    \label{eq:offlineil}
   L_{\mathrm{BC}}(\pi)\coloneqq \E_{x\sim\xdist}\,\E_{y\sim\expolicy(\cdot\mid x)}\big[\ell(\pi(\cdot\mid x),y)\big].
    \vspace{-6mm}
    \end{align}
    The goal of behavior cloning is to find a policy $\pi \in \Pi$ that minimizes $L_{\mathrm{BC}}(\pi)$.
\end{definition}
A standard instantiation of behavior cloning in LLM post-training uses the negative log-likelihood loss $\ell(\pi(\cdot\mid x),y) = -\log \pi(y\mid x)$,
which corresponds to the \textbf{supervised fine-tuning (SFT)} objective. 
\fi

\ifneurips
\textbf{Online Imitation Learning.} In the online setting, the learner can actively query the expert for feedback.
Learning proceeds in episodes:
in the $i^{\text{th}}$ episode, the learner observes $\x_i \sim \xdist$, samples a response $\y_i\sim\policy_i(\cdot\mid\x_i)$ from a policy $\policy_i$, and receives expert feedback on this response.
In existing MDP formulations (e.g., \citep{ross2010efficient, foster2024behaviorcloning}), the expert provides step-wise guidance along a trajectory.
We instead use a simpler formulation for contextual bandit: in each episode, the learner produces a complete response $\y_i$ (a token sequence), and the expert reveals its conditional probability $\expolicy(\y_i\mid \x_i)$ on that response. 
This form of probability access is natural in language model distillation, where the expert model’s logits are available.
Furthermore, it captures the core interaction pattern of widely used on-policy distillation algorithm \citep{agarwal2024policy, gu2023minillm, lu2025onpolicydistillation}, and enables a meaningful characterization of the benefit of online access to expert information beyond a fixed offline dataset (\Cref{sec:compare_non_realizable}).
\else
\paragraph{Online Imitation Learning.} In the online setting, the learner can actively query the expert for feedback.
Learning proceeds in episodes:
in the $i^{\text{th}}$ episode, the learner observes $\x_i \sim \xdist$, executes a certain policy $\policy_i$ to generate a response $\y_i\sim \policy_i(\cdot \mid \x_i)$, and receives expert-provided information evaluated on this response.
In existing online imitation learning formulations based on MDPs (e.g., \citep{ross2010efficient, foster2024behaviorcloning}), the expert provides step-wise guidance along a trajectory.
In contrast, we adopt a simpler contextual bandit formulation: in each episode, the learner produces a complete response $\y_i$ (a token sequence), and the expert reveals its conditional probability $\expolicy(\y_i\mid \x_i)$ on that response. 
This form of probability access is natural in language model distillation, where the expert model’s logits are available.
Furthermore, it captures the core interaction pattern of widely used on-policy distillation algorithms \citep{agarwal2024policy, gu2023minillm, lu2025onpolicydistillation}, and enables a meaningful characterization of the benefit of online access to expert information beyond a fixed offline dataset (\Cref{sec:compare_non_realizable}).
\fi

\ifneurips
While the instantiation of online IL can be flexible, we focus here on the following formulation:
\begin{definition}[Population Objective of Online IL]
\label{def:onlineil}
Given an expert policy $\expolicy$ and a \emph{shaping function} $f:\mathbb{R}\rightarrow\mathbb{R}$ that maps expert density to a scalar score, the objective of online IL is to maximize
{
\setlength{\abovedisplayskip}{3pt}
\setlength{\belowdisplayskip}{3pt}
\setlength{\abovedisplayshortskip}{3pt}
\setlength{\belowdisplayshortskip}{3pt}
\begin{equation}
\label{eq:onlineil}
\textstyle J_{\mathrm{on}}(\pi)
\coloneqq
\mathbb{E}_{x\sim\xdist}
\mathbb{E}_{y\sim\pi(\cdot\mid x)}
\!\left[
f\!\left(\expolicy(y\mid x)\right)
\right].
\end{equation}
}
\end{definition}
\vspace{-3.mm}
The key difference to the offline objective (\Cref{eq:offlineil}) is that the inner expectation is taken over the student policy $\pi$, rather than the expert policy $\expolicy$. 
\else
While the objective of online IL can be flexible, in this work we focus on the following formulation:
\begin{definition}[Population Objective of Online Imitation Learning]
\label{def:onlineil}
Given an expert policy $\expolicy$ and a \emph{shaping function} $f:\mathbb{R}\rightarrow\mathbb{R}$ that maps the expert density to a scalar score, the population objective of online imitation learning is
\begin{equation}
\label{eq:onlineil}
J_{\mathrm{on}}(\pi)
\coloneqq
\mathbb{E}_{x\sim\xdist}
\mathbb{E}_{y\sim\pi(\cdot\mid x)}
\!\left[
f\!\left(\expolicy(y\mid x)\right)
\right].
\end{equation}
The goal of online imitation learning is to find a policy $\pi \in \Pi$ that maximizes $J_{\mathrm{on}}(\pi)$.
\end{definition}
The key difference to the offline objective (\Cref{eq:offlineil}) is that the inner expectation is taken over the student policy $\pi$, rather than the expert policy $\expolicy$. 
\fi
\ifneurips
A prominent instantiation of online imitation learning in language model post-training is \textbf{on-policy distillation} \citep{agarwal2024policy, gu2023minillm}, which uses a reverse-KL objective and is optimized in an online, on-policy manner:
\begin{equation}
\label{eq:reverse-kl}
\textstyle
\max_{\pi\in\policyclass}
-\mathbb{E}_{\x\sim\xdist}
\!\left[
\KL{\pi(\cdot\mid \x)}{\expolicy(\cdot\mid \x)}
\right].
\end{equation}
This reverse-KL objective is an instance of \Cref{def:onlineil} with $f\left(\expolicy(\y\mid \x)\right)=\log \left(\expolicy(\y\mid \x)\right)$, together with an additional entropy regularization term that promotes exploration and prevents policy collapse.\footnote{Our analysis can be generalized to shaping function $f$ depending on both $\expolicy(\y\mid \x)$ and $\pi(\y\mid \x)$. This extension fully captures the reverse-KL objective. For simplicity, we focus on shaping functions that depend only on the expert density.}


\else

A prominent instantiation of online imitation learning in language model post-training is \textbf{on-policy distillation} \citep{agarwal2024policy, gu2023minillm}, which uses a reverse-KL objective and is optimized in an online, on-policy manner:
\begin{equation}
\label{eq:reverse-kl}
\max_{\pi\in\policyclass}
-\mathbb{E}_{\x\sim\xdist}
\!\left[
\KL{\pi(\cdot\mid \x)}{\expolicy(\cdot\mid \x)}
\right].
\end{equation}
This reverse-KL objective is an instance of \Cref{def:onlineil} with $f\left(\expolicy(\y\mid \x)\right)=\log \left(\expolicy(\y\mid \x)\right)$, together with an additional entropy regularization term that promotes exploration and prevents policy collapse:\footnote{The shaping function $f$ can be generalized to depend on both $\expolicy(\y\mid \x)$ and $\pi(\y\mid \x)$ without affecting our analysis. This extension fully captures the reverse-KL objective. For simplicity, we focus on shaping functions that depend only on the expert density.}
\ificml
\begin{align*}
\label{eq:reverse-kl-decomp}
&-\mathbb{E}_{\x\sim\xdist}
\!\left[
\KL{\pi(\cdot\mid \x)}{\expolicy(\cdot\mid \x)}
\right]\\
=&
\;\mathbb{E}_{x\sim\xdist}\mathbb{E}_{y\sim\pi(\cdot\mid \x)}
\!\left[\log \expolicy(y\mid \x)\right] 
+
\mathbb{E}_{\x\sim\xdist}
\!\left[
\mathcal{H}\!\left(\pi(\cdot\mid \x)\right)
\right],
\nonumber
\end{align*}
\else 
\begin{align*}
\label{eq:reverse-kl-decomp}
&-\mathbb{E}_{\x\sim\xdist}
\!\left[
\KL{\pi(\cdot\mid \x)}{\expolicy(\cdot\mid \x)}
\right]
=
\mathbb{E}_{\x\sim\xdist}\mathbb{E}_{y\sim\pi(\cdot\mid \x)}
\!\left[\log \expolicy(\y\mid \x)\right] 
+
\mathbb{E}_{\x\sim\xdist}
\!\left[
\mathcal{H}\!\left(\pi(\cdot\mid \x)\right)
\right],
\nonumber
\end{align*}
\fi
where $\mathcal{H}(\pi(\cdot\mid \x)) \coloneqq - \mathbb{E}_{\y\sim\pi(\cdot\mid \x)}[\log \pi(\y\mid \x)]$ denotes the policy entropy.
\fi

\ifneurips
\begin{remark}[Contextual bandit formulation]

Although IL is often analyzed in MDPs, our LLM post-training setting collapses to a contextual bandit at the response level: generation has deterministic transitions, and the reward (e.g., whether a math problem is solved correctly) is defined over the complete trajectory. In other words, given a context $x$, a policy (language model) $\pi$ essentially specifies a distribution over entire response trajectories $y$.
This contextual bandit view also matches practical SFT (\cref{eq:offlineil}) and on-policy distillation (\cref{eq:reverse-kl}) objectives, which are defined over full responses.
Moreover, we focus on \emph{non-realizability} as a source of online IL's advantage. Unlike the classical error-accumulation view, which attributes this advantage to mitigating compounding errors over long horizons \citep{ross2010efficient}, the advantage from non-realizability already appears in contextual bandits with horizon $H=1$. Removing the multi-step structure allows us to isolate this source of advantage.

\end{remark}
\else

\begin{remark}[Contextual bandit formulation]
Previous analyses of imitation learning are typically framed in the Markov Decision Process (MDP) setting, which involves sequential decision-making, stochastic state transitions, and rewards assigned at each step. In this work, we focus on autoregressive language generation, which is a simpler special case: the state is accumulative with deterministic transition $s_t = (x, y_{\leq t}) = (s_{t-1}, y_t)$, and the reward (e.g., whether a math problem is solved correctly) is defined over the entire trajectory. As a result, this setup is essentially a contextual bandit: given a context $x$, the policy $\pi$ specifies the distribution over the entire response trajectories $y$.
This formulation also matches the practical objectives used in LLM post-training. In particular, both SFT (\cref{eq:offlineil}) and on-policy distillation (\cref{eq:reverse-kl}) are defined at the full-trajectory level, and can therefore be naturally captured within the contextual bandit framework.


Moreover, in this work, we identify \emph{non-realizability} as a source of advantage of online over offline imitation learning that is distinct from the classical error-accumulation view. Unlike the latter, which attributes the benefit of online interaction to the mitigation of compounding errors over long horizons \citep{ross2010efficient}, the advantage arising from non-realizability already appears in contextual bandits with $H=1$. By removing the multi-step structure, the contextual bandit formulation provides a cleaner setting for isolating this source of advantage.
\end{remark}
\fi





\ifneurips
\vspace{-2mm}
\fi
\section{Related Work}
\label{sec:related}
\ifneurips
\fi

\ifneurips
\noindent\textbf{Imitation Learning for LLM Post-Training.}
\else
\paragraph{Imitation Learning for LLM Post-Training.}
\fi
Imitation learning plays an important role in language-model post-training. Supervised fine-tuning (SFT) can be viewed as offline IL, where the model is trained with a log-likelihood objective on the demonstration data \citep{brown2020lfewshotlearner, radford2021learning, ouyang2022training}. 
\ifneurips
However, empirical studies show that SFT can struggle to improve a weak student when the expert is a substantially stronger reasoning model \citep{li2025small, jiang-etal-2025-teach}. 
\else
However, empirical studies show that SFT can struggle to improve a weak student when the expert is a substantially stronger reasoning model. In these settings, expert responses may be too compressed or rely on reasoning patterns beyond the student's representational capacity, making them difficult for the student to imitate effectively
\citep{li2025small, jiang-etal-2025-teach}.
\fi

\ifneurips
Recently, online IL has become a prominent approach for LLM post-training, with \textbf{on-policy distillation} as the representative instantiation \citep{kim2016sequence, gu2023minillm}. These methods have proven effective in practice \citep{lu2025onpolicydistillation,yang2026learning} and have been adopted in industrial-scale post-training pipelines \citep{yang2025qwen3,xiao2026mimo}. However, a theoretical understanding of when and why on-policy methods outperform offline SFT remains limited, as discussed below.
\else

Recently, \textbf{online imitation learning} has emerged as a powerful paradigm for LLM post-training. A representative instantiation in this context is on-policy distillation, where the learner actively samples responses from the current student model and queries the expert for probabilities along these self-generated trajectories \cite{kim2016sequence, gu2023minillm}. Unlike supervised fine-tuning that optimizes a forward-KL objective on fixed data, online distillation typically minimizes the reverse-KL divergence between the student and expert distributions \cite{gu2023minillm}. 
Recent work has further extended on-policy distillation to black-box settings without access to expert logits \citep{blackboxopd}, multi-expert distillation \citep{xiao2026mimo}, and self-distillation \citep{selfOPD}.
These methods have proven effective in practice \citep{lu2025onpolicydistillation,yang2026learning} and have been adopted in industrial-scale post-training pipelines \citep{yang2025qwen3,xiao2026mimo}, while more recent work studies their mechanisms and failure modes
\citep{li2026rethinkingopd}.
However, a theoretical understanding of when and why online methods outperform offline SFT remains limited, as discussed below.
\fi

\ifneurips
\noindent\textbf{Theoretical Analysis on Online and Offline IL.}
\else
\paragraph{Theoretical Analysis on Online and Offline IL.}
\fi
\ifneurips
Seminal work \citet{ross2010efficient} shows that offline imitation learning can incur error accumulation, specifically a quadratic dependence with respect to horizon, and Dagger-like algorithms can mitigate error amplification through online interactions under recoverability assumptions \citep{ross2011dagger, ross2014reinforcement}.
However, this line of work gives only algorithm-dependent lower bounds, rather than an end-to-end separation between offline and online IL \citep{foster2024behaviorcloning,rajaraman2020limitil}.
Indeed, \citet{foster2024behaviorcloning} show that, under \textit{realizability}, DAgger-like algorithms offer no worst-case statistical improvement over offline IL with log-likelihood loss.
While online interaction could theoretically yield improvements under additional assumptions (see the discussion in their Section 4), our empirical results do not provide evidence of such gains under realizability (\Cref{sec:compare_realizable}). 
\else
Seminal work \citet{ross2010efficient} shows that offline IL incurs error accumulation:
If the learner incurs a per-step error $\epsilon$ under the expert distribution, then when rolled out under the learner's own policy, the compounding error can scale quadratically with the horizon, as $O(H^2\epsilon)$. 
Online IL algorithms such as DAgger \citep{ross2011dagger, ross2014reinforcement} improve this dependence to $O(\mu H\epsilon)$, where a small recoverability coefficient $\mu$ indicates that the expert can recover effectively from suboptimal intermediate actions. 
However, this line of work gives only algorithm-dependent lower bounds, rather than an end-to-end separation between offline and online IL \citep{foster2024behaviorcloning,rajaraman2020limitil}.

Indeed, \citet{foster2024behaviorcloning} show that, under \textit{realizability}, DAgger-like algorithms offer no worst-case statistical improvement over offline IL with log-likelihood loss.
While online interaction could theoretically yield improvements under additional assumptions (see the discussion in their Section 4), our empirical results do not provide evidence of such gains under realizability (\Cref{sec:compare_realizable}). 
Prior work has also studied the minimax sample complexity of imitation learning
in tabular MDPs and policy classes
\citep{rajaraman2020limitil,Rajaraman2021value,swamy2022minimax}.
\fi


\ifneurips
In non-realizable settings, \citet{rohatgi2025computational} formulate IL as minimizing the Hellinger distance between student and expert. They show that an offline BC algorithm, \texttt{RhoEstimatorBC}, is already statistically optimal for this objective \citep[Theorem 3.1]{rohatgi2025computational}, and no computational advantage of online IL is established. This suggests that discrepancy-based analyses can be overly pessimistic and may not explain the practical advantage of online IL. We discuss additional related work in \Cref{app:additional_related}. 
\else
In non-realizable settings, \citet{rohatgi2025computational} formulate IL as minimizing the Hellinger distance between the student and the expert. They show that a specific offline behavior cloning algorithm, \texttt{RhoEstimatorBC}, already achieves optimal statistical complexity for this objective \citep[Theorem 3.1]{rohatgi2025computational}, and no computational advantage of online interaction is established. This suggests that discrepancy-based analyses can be overly pessimistic and may not explain the practical advantage of online IL.
\fi


\ifneurips
\else
Another line of work is inverse reinforcement learning (IRL), which aims to recover an underlying reward function from expert demonstrations \citep{abbeel2004apprenticeship, ziebart2008maximum, syed2007game, chang2021mitigating}. Recent work shows that interaction with the environment can enable efficient imitation learning under misspecification through IRL, given certain structural assumptions \citep{espinosa2025efficient}. However, these results typically assume that the reward function lies in a finite and realizable hypothesis class, and the IRL algorithms are not commonly adopted in LLM post-training. Finally, a recent study \citep{song2025distill} also empirically identifies misspecification, rather than sampling error, as a more fundamental cause of behavior cloning’s suboptimal performance in partially observed MDPs.
\fi

\ifneurips
\vspace{-3mm}
\fi
\section{Offline IL Suffices Under Realizability}
\label{sec:compare_realizable}
Classical analyses attribute the advantage of online IL to mitigating long-horizon error accumulation \citep{ross2010efficient, ross2011dagger}. 
However, recent theoretical work \citep{foster2024behaviorcloning} shows that, for the log-likelihood objective used in SFT, offline behavior cloning already achieves optimal statistical complexity in realizable settings, leaving no worst-case statistical advantage for online IL.
That said, this theory does not cover every possible advantage of online IL in realizable settings. For example, it studies statistical complexity but does not consider optimization dynamics, and it does not rule out the possibility that online IL may still benefit from favorable problem-specific structure.
In this section, as a sanity check for the realizable setting, we empirically demonstrate that SFT (offline IL) is indeed sufficient to fully recover expert performance in this regime, and on-policy distillation does not yield further gains.  

\subsection{Experimental Setup}
    
    

\ifneurips
\else
\begin{figure}
    \centering
    \includegraphics[width=\linewidth]{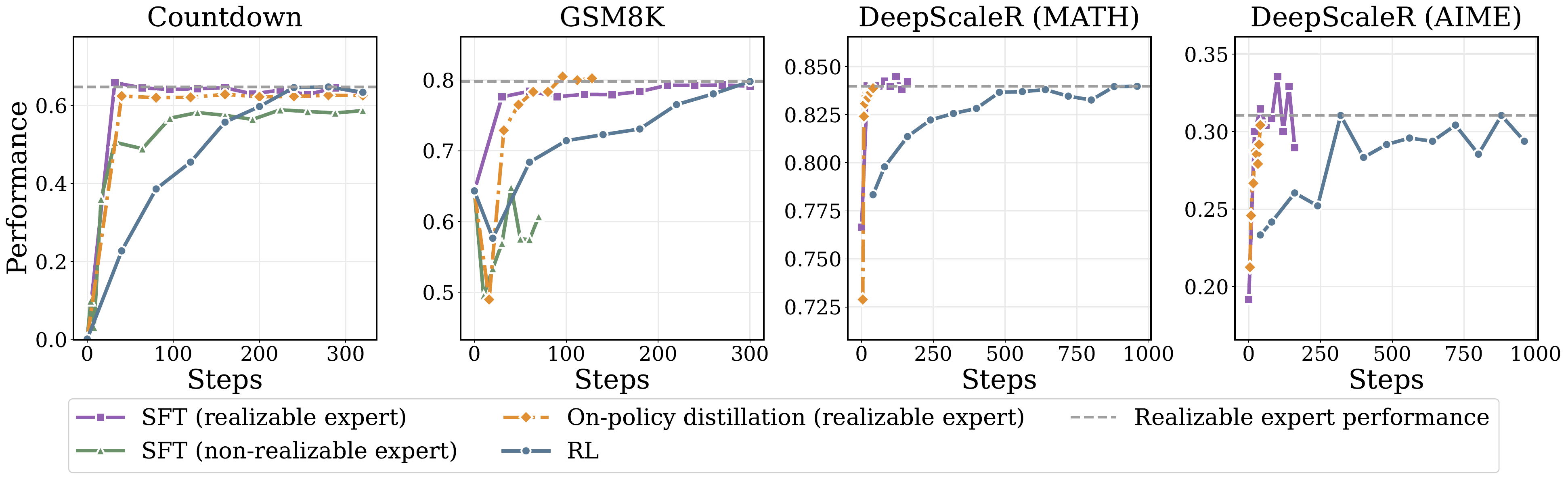}
    \caption{\textbf{Offline IL suffices with realizable experts.}
Across different tasks, SFT with a realizable expert fully matches expert performance, and on-policy distillation yields no accuracy gains or faster training. In contrast, when the expert is non-realizable, SFT exhibits a significant performance gap. }
    \label{fig:in-dist}
    \vspace{-4mm}
\end{figure}
\fi
\ifneurips
\begin{figure}
\vspace{-3mm}
    \centering
    \includegraphics[width=\linewidth]{ICML/fig/in-dist_new.pdf}
    \caption{
\small\emph{\textbf{Offline IL suffices with realizable experts.}
Across different tasks, SFT with a realizable expert fully matches expert performance, and on-policy distillation yields no accuracy gains or faster training. In contrast, when the expert is non-realizable, SFT exhibits a significant performance gap. }}
    \label{fig:in-dist}
    \vspace{-6mm}
\end{figure}
\else
\fi
We conduct experiments on three tasks: a synthetic Countdown \citep{tinyzero} task\footnote{Countdown asks the model to use each given integer exactly once, with operators $\{+,-,\times,\div\}$, to form an expression matching a target; e.g., for $[8,3,2,1]$ and target $24$, $8\times3\times(2-1)=24$.} and two math reasoning datasets: GSM8K~\citep{cobbe2021training} and DeepScaleR \citep{luo2025deepscaler} (evaluated on MATH \citep{hendrycks2021measuring} and AIME benchmarks). 
We use Qwen2.5-3B-Instruct \citep{team2024qwen2} as the base model for Countdown, 
Llama-3.2-3B \citep{grattafiori2024llama} fine-tuned on OpenR1 \citep{face2025open} for GSM8K, 
and DeepSeek-R1-Distill-Qwen-1.5B \citep{deepseekai2025deepseekr1incentivizingreasoningcapability} for DeepScaleR (details in \cref{app:experiment_details}).
We compare four training paradigms:

\begin{itemize}
\item \textbf{RL (GRPO):} We train the model using GRPO~\citep{shao2024deepseekmath0} with verifiable outcome rewards to obtain the realizable expert model for SFT and on-policy distillation.

\item \textbf{SFT with realizable expert:} The student model is trained by SFT on samples generated by the RL-trained model. 
By construction, this setup guarantees realizability, since the expert is obtained within the same model family.

\item \textbf{SFT with non-realizable expert:}
For the Countdown task, the student is trained directly on ground-truth solutions.
For GSM8K, the student is supervised by a highly capable non-realizable expert, DeepSeek-V3.2-Exp~\citep{liu2024deepseek}, which achieves over $95\%$ accuracy on GSM8K.

\item \textbf{On-policy distillation with realizable expert:} The student model is trained by on-policy distillation using the same RL-trained expert described above.
\end{itemize}

\subsection{Experimental Results}

As shown in \Cref{fig:in-dist}, on both the synthetic Countdown task and real-world math reasoning tasks, offline IL (SFT) with a realizable expert matches expert performance, while online IL (on-policy distillation) improves neither final performance nor training efficiency.
This provides no evidence of the classical error-accumulation phenomenon of offline IL in the realizable setting.
Moreover, SFT is substantially more sample-efficient than RL training. 
In contrast, when the expert is non-realizable, SFT falls well short of the expert’s performance. 
This indicates that misspecification is a key ingredient in online IL outperforming offline IL in practice.

We further evaluate out-of-distribution (OOD) generalization on the Countdown task and catastrophic forgetting during GSM8K training. We find that both SFT and on-policy distillation with a realizable expert achieve OOD performance close to that of the expert and exhibit minor forgetting. By contrast, SFT with a non-realizable expert exhibits substantially worse OOD generalization and more severe forgetting. Detailed settings and results are provided in \cref{app:additional_experiments}.

\ifneurips
\vspace{-2mm}
\fi
\section{Non-Realizability as a Source of Online IL Gains}
\label{sec:compare_non_realizable}
\ifneurips
\vspace{-1mm}
\fi
Since the advantage of online IL in LLM post-training does not arise under realizability (\Cref{sec:compare_realizable}), we turn to the \emph{non-realizable} setting, where the expert policy cannot be represented by the student policy class, i.e. $\expolicy\not\in\policyclass$.
In this setting, online on-policy distillation has been empirically shown to significantly outperform offline IL (SFT)
\citep{gu2023minillm,yang2025qwen3, yang2026learning}\footnote{\citet{gu2023minillm,yang2025qwen3}, and \citet{yang2026learning} study strong-to-weak distillation (distilling a smaller model from a much larger expert), and their setup therefore falls into the non-realizable setting.}.
However, despite these successes, a theoretical understanding of \emph{why and when} online interaction helps in non-realizable settings remains limited.


Previous work framed imitation learning as minimizing the distributional discrepancy (e.g., Hellinger distance) between the student and expert policies \citep{rohatgi2025computational}. 
In this section, we first discuss and empirically verify that such discrepancy-based analyses can be overly pessimistic under non-realizability. 
Furthermore, discrepancy analysis can be misaligned with the true reward objective, and therefore does not adequately explain the widely observed empirical advantage of online over offline IL methods.

Motivated by these limitations, we develop a new perspective on the advantage of online over offline IL in non-realizable settings by directly accounting for the \textit{reward structure}.
For offline IL, we identify a fundamental information-theoretic barrier under non-realizability even in contextual bandits -- 
a barrier distinct from classical error-accumulation arguments \citep{ross2010efficient, ross2011dagger}.
We show in \Cref{thm:lowerboundoffline} that the sample complexity of any offline IL algorithm that learns from i.i.d.\ expert demonstrations scales with a coverage coefficient $\excoverage$ (\Cref{eq:coverage}), which can be interpreted as a measure of the mismatch between the expert and student policies.
For online IL, we characterize the interplay between 
(i) the expert signal, (ii) the misspecified student policy class, and 
(iii) the reward function, and identify a general condition under which online IL is provably efficient despite severe misspecification (\cref{thm:main}).

\vspace{-1mm}
\subsection{Limitations of Discrepancy-based Analyses in Non-Realizable Settings}
\label{sec:distance_insufficient}

\ifneurips
A common approach to deriving IL guarantees is to upper bound the performance suboptimality by a distributional discrepancy between the expert policy $\expolicy$ and the learned policy $\policyhat \in \policyclass$ \citep{rajaraman2020limitil,foster2024behaviorcloning,rohatgi2025computational}. For instance, one typically argues that
$
\textstyle \val(\expolicy) - \val(\policyhat)
\;\le\;
\E_{\x \sim \xdist}\!\left[\dTV\!\left(\expolicy(\cdot \mid \x), \policyhat(\cdot \mid \x)\right)\right],
$
where $\dTV$ can be further controlled by other statistical distances or divergences.\footnote{\citet{foster2024behaviorcloning} gives tighter upper bound based on Hellinger distance than the direct total variation bound for deterministic experts or experts with bounded reward variance (Theorems 2.1 and 3.1). Our discussion is not sensitive to this distinction.}
\else
A common approach to deriving IL guarantees is to upper bound the performance suboptimality by a distributional discrepancy between the expert policy $\expolicy$ and the learned policy $\policyhat \in \policyclass$ \citep{rajaraman2020limitil,foster2024behaviorcloning,rohatgi2025computational}. For instance, one typically argues that
\begin{equation}
\label{eq:tv_bound_value}
\textstyle \val(\expolicy) - \val(\policyhat)
\;\le\;
\E_{\x \sim \xdist}\!\left[\dTV\!\left(\expolicy(\cdot \mid \x), \policyhat(\cdot \mid \x)\right)\right],
\end{equation}
where $\dTV$ can be further controlled by other statistical distances or divergences, such as KL, Hellinger, or $\chi^2$ divergences.\footnote{\citet{foster2024behaviorcloning} gives a tighter upper bound based on Hellinger distance than the direct TV bound for deterministic experts or experts with bounded reward variance (Theorems 2.1 and 3.1). Our discussion is not sensitive to this distinction.}
\fi


In realizable settings, the approximation term vanishes:
$\inf_{\policy \in \policyclass}\;
\E_{\x \sim \xdist}\!\left[\dTV\!\left(\expolicy(\cdot \mid \x), \policy(\cdot \mid \x)\right)\right]
= 0,$
so minimizing discrepancy drives the performance suboptimality $\val(\expolicy) - \val(\policyhat)$ to zero. 
In non-realizable settings, however, even the best-in-class discrepancy is strictly positive, so discrepancy-based bounds cannot certify suboptimality below this irreducible distance. 
While such bounds are tight in the worst case over all possible reward functions, they can be overly pessimistic for ``average-case'' reward functions: a large expert-student discrepancy does not necessarily imply a large performance gap.
This is especially relevant for language models, where discrepancies may arise from wording, style, or reasoning traces that are not related to correctness.
For example, a strong expert may solve a math problem directly, while a weaker student may reach the same answer through trial and error; their response distributions can differ substantially despite similar success rates. We verify this empirically  in LLM post-training below.

Furthermore, discrepancy-based analyses do not explain the practical gains of online IL over offline IL. 
Prior work has shown that offline $\rho$-estimator behavior cloning is already statistically optimal for minimizing the discrepancy even in non-realizable settings \citep[Theorem 3.1]{rohatgi2025computational}; see \cref{sec:related} for more details. This suggests that the advantage of online interaction, when it arises, is not in identifying the policy closest to the 
\ifneurips
expert,
\else
expert under a distributional discrepancy,
\fi but in identifying the student policy with higher reward under a particular reward function.
\ifneurips
These limitations motivate us to go beyond the discrepancy-based analysis and develop a finer-grained, reward-dependent analysis for non-realizable settings.
\else
\fi

\ifneurips
\textbf{Empirical Validation\ \ }
We empirically validate that expert--student distributional discrepancy can be much larger than the corresponding performance gap.
We take DeepSeek-R1-0528  (692B) as the expert and DeepSeek-R1-0528-Qwen3-8B as the student 
\citep{deepseekai2025deepseekr1incentivizingreasoningcapability}, which is distilled from DeepSeek-R1-0528. We collect 840 responses from each model on the AIME 2024 and 2025 benchmarks (details in \cref{app:detail_distance}). 
As shown in \Cref{fig:length-keywords}, the two models exhibit substantial differences in their response distributions, as reflected by two simple and interpretable features: response length (discretized into 5{,}000-token bins), and the frequency of reasoning-related keywords \citep{zeng2025simplerl,gandhi2025cognitive}.
These metrics are lower bounds of the total variation distance between the response distributions, since distinct responses may share the same length bin or selected keywords.
Thus, the true expert-student discrepancy can only be larger than reported.
Since even these lower bounds exceed the observed performance gap by a wide margin, it validates that distributional discrepancies can be much larger than performance differences.

\else

\paragraph{Empirical Validation.}
We empirically validate that the distributional discrepancy between an expert and a student learned via imitation learning can be substantially larger than their performance gap. Specifically, we take DeepSeek-R1-0528, a 692B-parameter model, as the expert, and DeepSeek-R1-0528-Qwen3-8B as the student 
\citep{deepseekai2025deepseekr1incentivizingreasoningcapability}, which is distilled from DeepSeek-R1-0528 using Qwen3-8B-Base \citep{yang2025qwen3} as the base model. We evaluate both models on the AIME 2024 and AIME 2025 benchmarks. For each benchmark, each model generates 14 responses per question, yielding 840 responses per model in total.

As shown in \Cref{fig:length-keywords}, the two models exhibit substantial differences in their response distributions, as reflected by two simple and interpretable features: response length (discretized into 5{,}000-token bins, and measured with the Qwen3-8B tokenizer for both models) and the frequency of keywords associated with reasoning behavior in the responses \cite{zeng2025simplerl, gandhi2025cognitive}.
These metrics are lower bounds of the total variation distance between the response distributions, since distinct responses may share the same length bin or selected keywords.
Thus, the true expert-student discrepancy can only be larger than reported.
Since even these lower bounds already exceed the observed performance gap by a wide margin,
our results provide empirical validation that distributional discrepancies can be much larger than performance differences.
\fi

\vspace{5mm}
These limitations motivate us to go beyond the reward-agnostic discrepancy-based analysis in non-realizable settings, and instead ask when offline IL becomes inefficient (\cref{sec:offlinelimitation}) and when online IL can yield improvements (\cref{sec:onlineefficient}) under a specific reward function.


\ificml
\begin{figure}
    \centering
    \includegraphics[width=\linewidth]
    {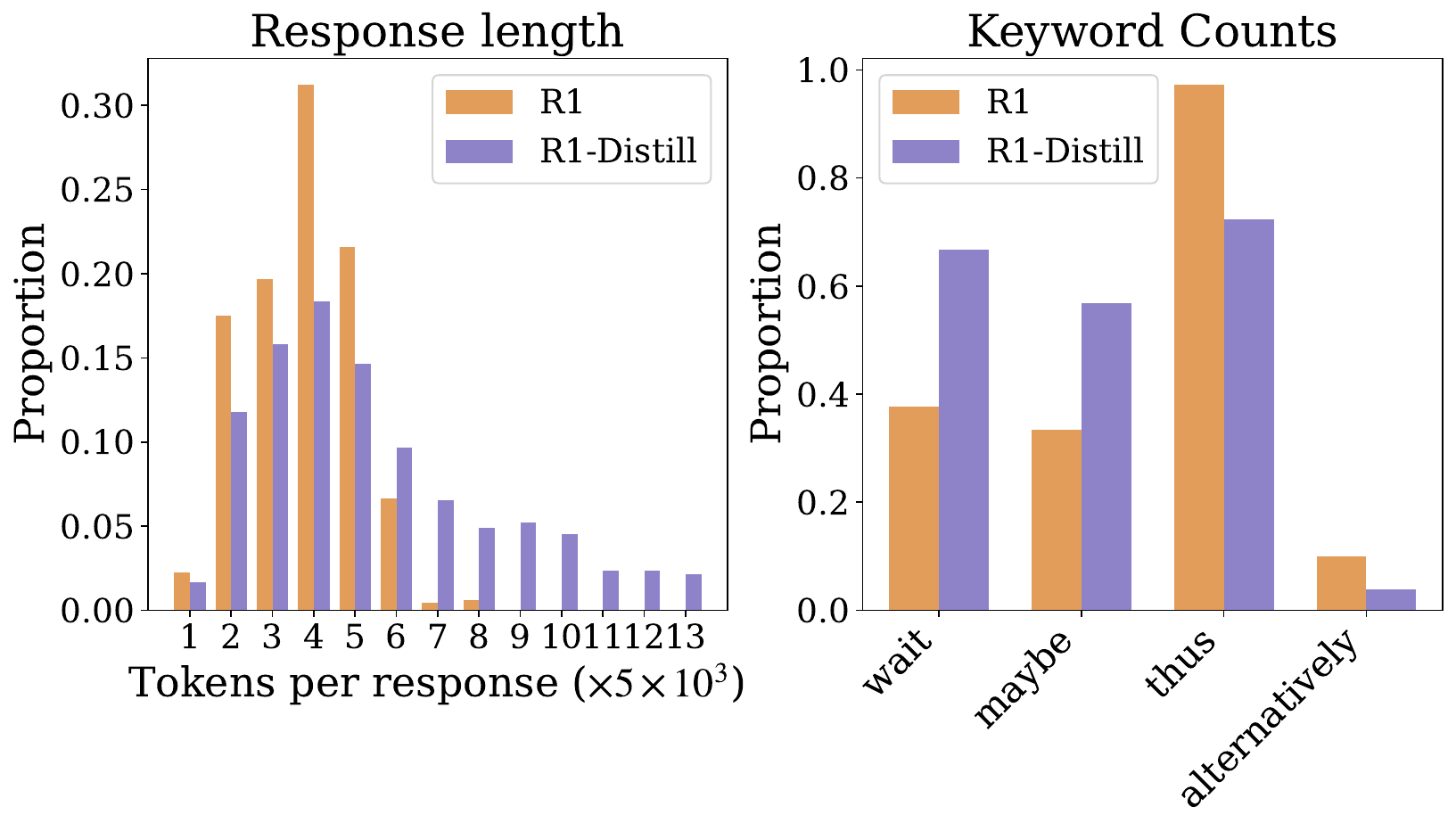}
    \caption{Response-length distribution and keyword frequencies for DeepSeek-R1-0528 (expert) and DeepSeek-R1-0528-Qwen3-8B (student) on AIME 2024 and AIME 2025 (14 responses per question). The results indicate a substantial distributional discrepancy between the expert and the student.}
    \label{fig:length-keywords}
    \vspace{-3mm}
\end{figure}

\begin{table}[t]
\centering
\caption{\textbf{Distributional discrepancy vs. performance gap.}
The TV distance induced by discretized response length (5{,}000-token bins), keyword-frequency gaps, and the performance gap between DeepSeek-R1-0528 (expert) and DeepSeek-R1-0528-Qwen3-8B (student) on AIME.
Both the length-induced TV distance and the keyword-frequency gaps are \textbf{lower bounds} on the TV distance between the response distributions, yet they already exceed the performance gap, indicating that discrepancy-only guarantees can be overly loose.
}
\label{tab:gap_summary}
\small
\begin{tabular}{l c}
\toprule
\textbf{Metric} & \textbf{Value (\%)} \\
\midrule
Length-induced TV distance & 29.99 \\
Keyword-frequency gap (\textit{wait}) & 29.12 \\
Keyword-frequency gap (\textit{maybe}) & 23.29 \\
Keyword-frequency gap (\textit{thus}) & 24.88 \\
Keyword-frequency gap (\textit{alternatively}) & 6.20 \\
\midrule
Performance gap & 5.46 \\
\bottomrule
\end{tabular}
\vspace{-5mm}
\end{table}

\else
\begin{figure}[t]
    \centering
    \begin{minipage}[t]{0.46\linewidth}
        \vspace{-2pt}
        \centering
        \includegraphics[width=\linewidth]{ICML/fig/sec5/length_and_keywords.pdf}
        \vspace{-14pt}
        
        \textbf{(a)} Distributional discrepancy proxies.
    \end{minipage}
    \hfill
    \begin{minipage}[t]{0.50\linewidth}
        \vspace{3pt}
        \centering
        {\small
        \begin{tabular}{l c}
        \toprule
        \textbf{Metric} & \textbf{Value (\%)} \\
        \midrule
        Length-induced TV distance & 29.99 \\
        Keyword-frequency gap (\textit{wait}) & 29.12 \\
        Keyword-frequency gap (\textit{maybe}) & 23.29 \\
        Keyword-frequency gap (\textit{thus}) & 24.88 \\
        Keyword-frequency gap (\textit{alternatively}) & 6.20 \\
        \midrule
        Performance gap & 5.46 \\
        \bottomrule
        \end{tabular}
        }
        
        \ifneurips
        \vspace{9pt}
        \else
        \vspace{28pt}
        \fi
        \textbf{(b)} Discrepancy vs. performance gap.
    \end{minipage}
    \vspace{-4pt}
    \ifneurips
    \caption{\small\emph{
    \textbf{Bounding performance gap by expert-student discrepancy can be overly pessimistic.}
    We compare DeepSeek-R1-0528 (expert) and DeepSeek-R1-0528-Qwen3-8B (student) on AIME benchmark.
    \textbf{(a)} The expert and student differ substantially in response-length distributions and keyword frequencies.
    \textbf{(b)} Length-induced TV distance and keyword-frequency gaps are lower bounds of the TV distance between response distributions, yet already far exceed the performance gap.
    This suggests that discrepancy-based guarantees can be overly loose.
    }}
    \else
    \caption{{
    \textbf{Bounding performance gap by expert-student discrepancy can be overly pessimistic.}
    We compare DeepSeek-R1-0528 (expert) and DeepSeek-R1-0528-Qwen3-8B (student) on AIME benchmark.
    \textbf{(a)} The expert and student differ substantially in response-length distributions and keyword frequencies.
    \textbf{(b)} Length-induced TV distance and keyword-frequency gaps are lower bounds of the TV distance between response distributions, yet already far exceed the performance gap.
    This suggests that discrepancy-based guarantees can be overly loose for explaining reward performance.
    }}
    \fi
    \vspace{-5mm}
    \label{fig:length-keywords}
\end{figure}
\fi

\subsection{Information-theoretic Lower Bounds for Offline IL in Non-Realizable Settings}
\label{sec:offlinelimitation}

We now present an information-theoretic limitation of offline IL in non-realizable settings. Intuitively, the hard case arises when the expert's responses cannot be faithfully represented by the student policy class, while correct responses that are representable by the student are rarely revealed in expert samples.

We formalize this intuition in Theorem~\ref{thm:lowerboundoffline}, showing that even when the student class $\policyclass$ contains an optimal policy and the expert $\expolicy$ itself achieves perfect performance, learning from i.i.d.\ expert demonstrations can still be statistically hard in non-realizable settings.
Specifically, for any offline IL algorithm that observes only i.i.d.\ expert demonstrations, the sample complexity required to achieve a vanishing performance suboptimality must scale with the coverage coefficient $\excoverage$ defined as
\begin{align}
\label{eq:coverage}
\excoverage \;\coloneqq\; \sup_{\x\in\mathcal{X},\y\in\mathcal{Y}}\frac{\optimalpolicy(\y \mid \x)}{\expolicy(\y \mid \x)},
\end{align}
where $\optimalpolicy \in \arg\max_{\policy \in \policyclass} \val(\policy)$ \citep{munos2003error,chen2019information,rashidinejad2021bridging}.
This barrier is \textbf{specific to non-realizability}: $\excoverage=1$ under realizability since one can take $\pi^\star = \pi^e$, but can be arbitrarily large under misspecification, serving as a measure of the mismatch between the expert and $\pi^\star$.

\label{sec:lowerboundoffline}
\begin{tcolorbox}[
  enhanced, breakable,
  colframe=blue!40!black,
  boxrule=0.35pt, arc=1mm,
  title={
    \parbox{7.5cm}{
      \centering \textbf{Limitation of offline IL in non-realizable settings}
    }
  },
  coltitle=black, fonttitle=\small\sffamily\bfseries,
  colbacktitle=blue!15!white,
  colback=blue!5!white,
  boxed title style={
    sharp corners, boxrule=0pt,
    top=1pt, bottom=0.5pt, left=4mm, right=4mm,
    borderline={0.5pt}{0pt}{blue!20!white}
  },
  attach boxed title to top left={xshift=4mm,yshift*=-1.2mm},
  boxsep=1.5mm, top=1.5mm, bottom=1.5mm, left=2.5mm, right=4mm,
  before skip=10pt, after skip=10pt
]
\begin{theorem}
\label{thm:lowerboundoffline}
For any integer $K \ge 2$, any coverage parameter $\excoverage \ge 1$, and any (possibly randomized) offline imitation learner $\mathcal A$ with access to $N$ i.i.d.\ samples $(x_i,y_i)_{i=1}^N$ with
$x_i \sim \xdist$ and $y_i \sim \expolicy(\cdot \mid x_i)$, there exist
a contextual bandit problem $(\xdist, r)$,
an expert policy $\expolicy$,
and a student policy class $\policyclass$ such that the following hold:
\begin{enumerate}[topsep=5pt,leftmargin=15pt]\setlength{\itemsep}{0pt}
    \item The expert is optimal, i.e., $\val(\expolicy) = 1$.
    \item There exists an optimal student policy $\optimalpolicy \in \policyclass$ with $\val(\optimalpolicy) = 1$.
    \item The expert satisfies a pointwise coverage condition with respect to $\optimalpolicy$, i.e., $\excoverage <\infty$.
\end{enumerate}
Moreover, the policy $\policyhat$ output by $\mathcal A$ satisfies
\[
\textstyle 1-\mathbb E\!\left[ \val(\policyhat)\right]
    \;\gtrsim\;
    \min\!\big\{1,\ \frac{\excoverage \log |\policyclass|}{N}\big\},
\]
where the expectation is over the randomness of the learner and sampling process. 
\end{theorem}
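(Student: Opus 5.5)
We prove the bound with a Fano/Assouad-type reduction. The construction is a family of instances that share the same expert, so the offline data carry almost no information about which instance is active. The student class is shared as well, and only the reward changes between instances; which student policy is optimal therefore depends on the reward. The expert places mass $1/\excoverage$ on the response that reveals the optimal student policy, and puts the rest of its mass on a correct response that no student can represent.

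\textbf{Construction.} Let $M \asymp \log|\policyclass|$ index a packing, and take $d$ contexts with $\xdist$ uniform, where $d$ is chosen below. For each context $x$ the response space contains:
\begin{itemize}
\item a ``universal'' response $u$, which is correct under every instance;
\item $K$ candidate responses $a_1,\dots,a_K$.
\end{itemize}
Under instance $\sigma\in[K]^d$, the reward is $r_\sigma(x,u)=1$, $r_\sigma(x,a_{\sigma(x)})=1$, and $r_\sigma=0$ otherwise.

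We cannot let the expert depend on $\sigma$ through a large mass on $a_{\sigma(x)}$, since that would make the problem easy. The expert is
\[
\expolicy_\sigma(\cdot\mid x)=(1-1/\excoverage)\,\delta_u+(1/\excoverage)\,\delta_{a_{\sigma(x)}} .
\]
It is optimal, so $\val(\expolicy_\sigma)=1$ (item 1).

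The student class is $\policyclass=\{\pi_\tau:\tau\in[K]^d\}$, restricted to a packing if we want $|\policyclass|$ controlled, with $\pi_\tau(\cdot\mid x)=\delta_{a_{\tau(x)}}$. No student can place mass on $u$; this is the source of non-realizability. Then $\optimalpolicy=\pi_\sigma$ achieves $\val=1$ (item 2), and
\[
\sup_{x,y}\frac{\optimalpolicy(y\mid x)}{\expolicy_\sigma(y\mid x)}=\excoverage
\]
(item 3). We also have $\log|\policyclass|= d\log K\asymp d$ for $K\ge 2$ fixed. If we need the bound to scale as $\log|\policyclass|$ for general $K$, we can instead take $\log|\policyclass|\asymp d\log K$ and use Fano in place of Assouad; we commit to Assouad with $d\asymp\log|\policyclass|/\log K$, or simply $K=2$ per coordinate.

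\textbf{Lower bound.} For any output $\policyhat$, which without loss of generality we may take in $\Delta$-mixtures over the $a_k$ (since $u$ is unavailable to the student class), we have
\[
1-\val_\sigma(\policyhat)=\frac1d\sum_x\bigl(1-\policyhat(a_{\sigma(x)}\mid x)\bigr).
\]
Context $x$ contributes a constant error (at least $1-1/K\ge 1/2$ when $\sigma(x)$ is drawn uniformly) unless some sample at $x$ revealed $a_{\sigma(x)}$. The number of informative samples at $x$ is $\mathrm{Bin}(N,1/(d\excoverage))$, so the probability that none occurs is
\[
(1-1/(d\excoverage))^N\ge e^{-2N/(d\excoverage)} .
\]
Placing a uniform prior on $\sigma$, and using that the $u$-samples are independent of $\sigma$, we get
\[
\mathbb E[1-\val]\gtrsim e^{-2N/(d\excoverage)} .
\]
Choosing $d=\min\{\lceil N/\excoverage\rceil,\, \log|\policyclass|\}$ in the regime $N\gtrsim \excoverage\log|\policyclass|$ would only give a constant lower bound, so we adjust the construction to reach the rate $\excoverage\log|\policyclass|/N$. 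We use $d\asymp\log|\policyclass|$ contexts but a non-uniform $\xdist$: mass $\epsilon$ is spread over the $d$ hard contexts, and mass $1-\epsilon$ sits on a dummy context where every policy is correct. At a hard context the informative-sample probability becomes $\epsilon/(d\excoverage)$. Setting $\epsilon\asymp d\excoverage/N$, capped at $1$, keeps the no-reveal probability constant. The error then becomes
\[
\epsilon\cdot\Omega(1)=\Omega\Bigl(\min\Bigl\{1,\frac{\excoverage\log|\policyclass|}{N}\Bigr\}\Bigr).
\]
Since the prior argument averages over $\sigma$, some fixed instance attains the bound, which gives the theorem for randomized learners.

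\textbf{Main obstacle.} The main work is making the averaging rigorous. Conditioned on the samples, the posterior on $\sigma(x)$ stays uniform at every hard context $x$ that received no informative sample. This holds because the likelihood factorizes across contexts and the $u$-samples do not depend on $\sigma$. The remaining care is bookkeeping: matching $|\policyclass|=K^d$ with $\log|\policyclass|$, and handling the dummy context, where all students must be correct (add a response $a_0$ that every $\pi_\tau$ plays there).
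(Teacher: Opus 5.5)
Your proposal is correct and is essentially the paper's proof. Both use a response that is always correct but unavailable to every student, an expert putting mass $1/\excoverage$ on a secret binary choice at each context, a uniform prior over the secrets, and per-context mass $\min\{1/d,\excoverage/N\}$ (up to constants) so that each hard context goes unrevealed with constant probability, giving error of order the total hard mass $\min\{1,\excoverage\log|\policyclass|/N\}$. Your all-correct dummy context is a slight improvement over the paper, which dumps the leftover mass on $x_l$ and then drops that policy coordinate from the sum, leaving no hard context when $|\policyclass|=2$; note also that restricting $\policyhat$ to mixtures over the $a_k$ is not a without-loss-of-generality step but the properness assumption $\policyhat\in\policyclass$ (which the paper also uses implicitly), since an improper learner could simply play $u$.
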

\end{tcolorbox}
The proof constructs a hard instance where, at each context, the expert puts most mass on an optimal-reward action that is unavailable to the student class due to misspecification, while the unique optimal student action appears in expert samples only with probability $1/\excoverage$.
The argument follows Theorem~6.2 of \citet{rajaraman2020limitil}; see \Cref{app:proof-lowerbound}.




\subsection{A Structural Condition for Effective Online IL in Non-Realizable Settings}
\label{sec:onlineefficient}

Online interaction can alleviate the information bottleneck of learning from i.i.d.\ expert demonstrations by querying the expert on learner-induced inputs.
However, these queries may fall in regions rarely sampled by the expert, where the expert signal can be unreliable.
Thus, a high-performing expert is not necessarily a useful teacher for a misspecified student; its feedback must help the learner identify high-performing policies within the restricted student class $\policyclass$.
Motivated by \Cref{sec:distance_insufficient}, we seek structural conditions characterizing the interplay of misspecification structure and reward structure, under which online IL achieves small performance suboptimality $\max_{\policy \in \policyclass} \val(\policy) - \val(\policyhat) $, 
even when the learned policy remains far from the expert in distributional discrepancy. 
We consider a setting in which the feedback on a response $\y$ is given by the expert’s conditional probability $\expolicy(\y\mid\x)$, and study a specific class of online imitation learning objectives (\cref{def:onlineil}):
\begin{equation*}
\textstyle
\max_{\policy \in \policyclass} J_{\mathrm{on}}(\policy)
\coloneqq
\mathbb{E}_{x\sim\xdist}
\mathbb{E}_{y\sim\policy(\cdot\mid x)}
\!\left[
f\!\left(\expolicy(y\mid x)\right)
\right],
\end{equation*}
where $f:\mathbb{R}\rightarrow\mathbb{R}$ is a shaping function.
\ifneurips
The widely used on-policy distillation algorithm uses $f(\expolicy(\y\mid\x)) = \log (\expolicy(\y\mid\x))$ together with an additional entropy regularization term (\cref{sec:preliminary}).
\else
As an example, the widely used on-policy distillation algorithm for language model post-training uses $f(\expolicy(\y\mid\x)) = \log (\expolicy(\y\mid\x))$ together with an additional entropy regularization term (\cref{sec:pre-il}).
\fi


\ifneurips
\noindent\textbf{A Motivating Synthetic Example.} 
\else
\paragraph{A Motivating Synthetic Example.} 
\fi
We use a simple toy 2D Gaussian example (\Cref{fig:synthetic}) to motivate our condition. 
Actions are $a=(x,y)\in\mathbb R^2$, policies are Gaussians $\pi_\mu=\mathcal N(\mu,I_2)$, and the reward is $r(x,y)=\mathbf 1[y\ge 0]$.
The expert is $\pi^{\mathrm e}=\mathcal N((-2,2),I_2)$, and we compare two student classes $\policyclass_A=\{\mathcal N(\mu,I_2):\mu_x=0\}$ and $\policyclass_B=\{\mathcal N(\mu,I_2):\mu_y=\mu_x\}.$
Both classes are misspecified, yet the expected rewards resulting from IL (with reverse-KL objective as in on-policy distillation) differ sharply.
Let $\hat\policy_A,\hat\policy_B$ be the reverse-KL projections of $\pi^{\mathrm{e}}$ 
onto $\policyclass_A$ and $\policyclass_B$, respectively. Then $\val(\hat\policy_A)\approx 0.977,
\val(\hat\policy_B)=0.5$,
corresponding to excess risks $0.023$ and $0.5$, respectively.


The gap between $\policyclass_A$ and $\policyclass_B$ arises from their different \emph{misspecification structures relative to the reward}. 
The reward $r(x,y)=\mathbf 1[y\geq 0]$ depends only on the $y$-coordinate.
In $\policyclass_A$, misspecification is confined to the reward-irrelevant $x$-direction: the class fixes $\mu_x=0$ but leaves $\mu_y$ free.
Thus, the reverse-KL projection matches the expert's reward-relevant mean $\mu_y^{\mathrm e}=2$, yielding high expected reward.
In contrast, $\policyclass_B$ couples the two coordinates through $\mu_y=\mu_x$.
As a result, matching the expert distribution trades off the reward-relevant $y$-direction against the reward-irrelevant $x$-direction, leading to suboptimal expected reward.





\begin{figure}[t]
\ifneurips
\fi
    \centering
    \begin{minipage}[t]{0.57\linewidth}
    \vspace{-3pt}
        \centering
        \includegraphics[width=0.8\linewidth]{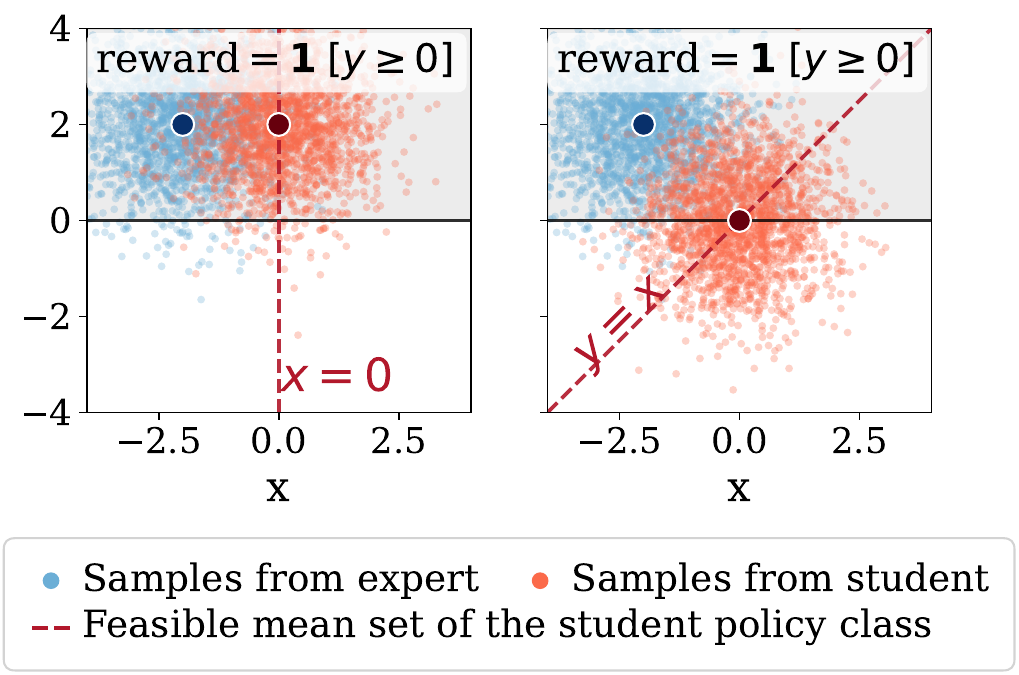}
        \caption{\
        \ifneurips
        \textbf{Synthetic example} with 2d actions,
        Gaussian policies, and reward $r(x,y)=\mathbf 1[y\ge 0]$. 
        \else
        \textbf{Synthetic example: misspecification relative to reward.}
        Actions are $(x,y)\in\R^2$ and policies are Gaussians; the reward is $r(x,y)=\mathbf 1[y\ge 0]$ (shaded region marks $r(x,y)=1$).
        \fi
        \textcolor[RGB]{100,170,220}{Expert} $\expolicy=\mathcal N((-2,2),I_2)$.
        \textit{Left:} student class $\Pi_A$: $\mu_x=0$
        \ifneurips
        \else
        (red dashed line) 
        \fi
        yields a reverse-KL projection with $\val(\hat\pi_A)\approx 0.977$.
        \textit{Right:} student class $\Pi_B$: $\mu_y=\mu_x$ yields $\val(\hat\pi_B)=0.5$.
        The performance gap arises from different \textbf{misspecification structures relative to the reward}.}
        \label{fig:synthetic}
    \end{minipage}%
    \hfill%
    \begin{minipage}[t]{0.39\linewidth}
    \vspace{0pt}
        \centering
        \includegraphics[width=\linewidth]{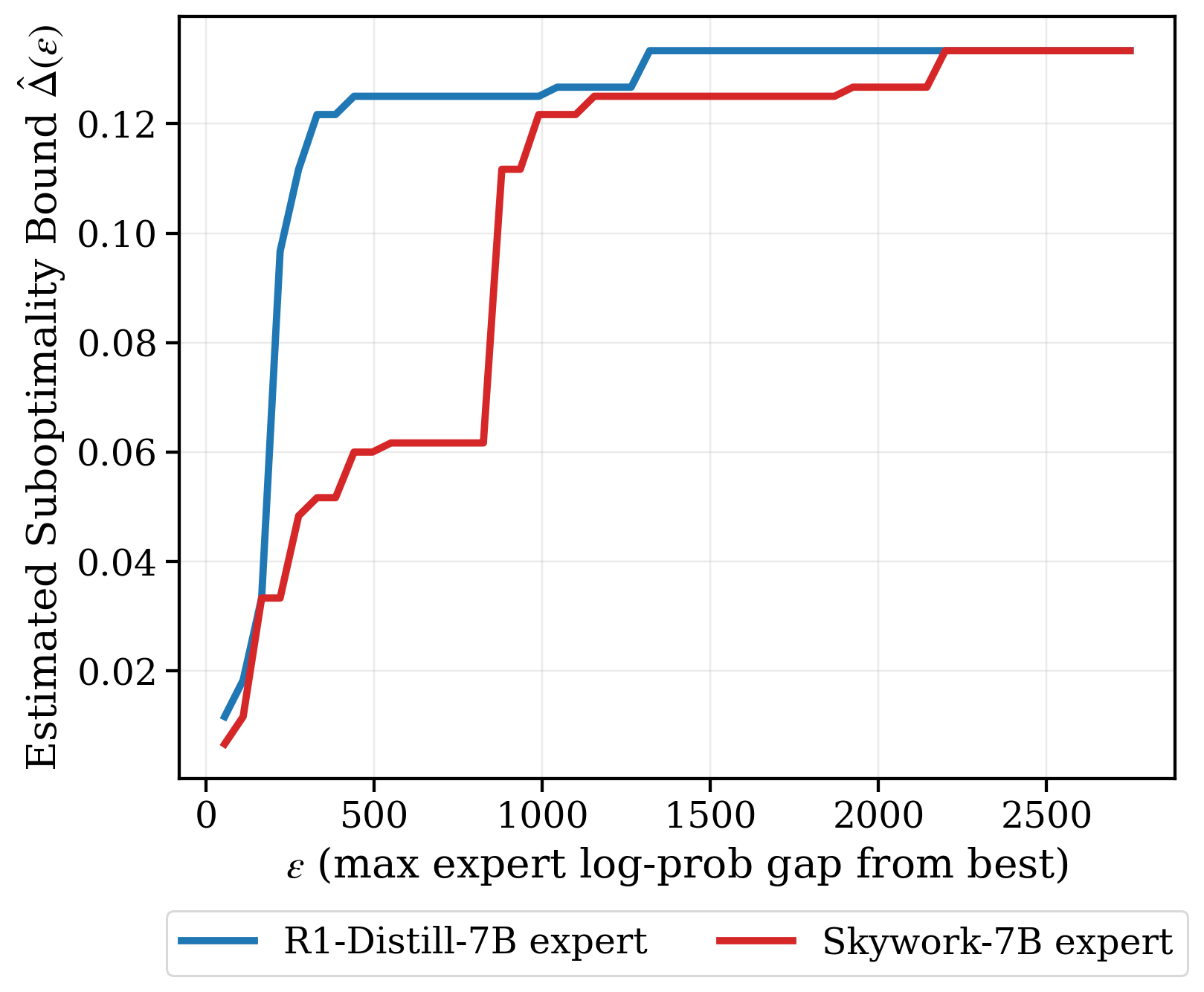}
        \caption{\textbf{Estimated suboptimality bound.}
    We empirically estimate the suboptimality bound in \Cref{thm:main} $\hat\Delta(\epsilon):=\min_{\alpha > 0}\frac{\epsilon + \hat \delta_\alpha (\epsilon)}{\alpha}$ through Monte Carlo.
    Consistent with \Cref{thm:main}, the expert leading to a better student (i.e., Skywork-7B) has a smaller suboptimality bound at every $\epsilon$.
    }
        \label{fig:minimax-suboptimality}
    \end{minipage}

    \ifneurips
    \vspace{-4mm}
    \fi
\end{figure}

\ifneurips
\noindent\textbf{A General Sufficient Condition: Misspecification--Reward Alignment.}
\else
\paragraph{A General Sufficient Condition: Misspecification--Reward Alignment.}
\fi
As illustrated in the above toy example, to characterize when IL is effective in non-realizable settings, one must account for misspecification structure between the student and expert under a particular reward function.
We now propose a general condition, which we term the \textit{Misspecification--Reward Alignment Condition} for the online IL objective considered in \cref{def:onlineil}. 
Under this condition, approximate optimality for the online IL objective translates into a guarantee on the expected reward suboptimality.

\begin{tcolorbox}[
  enhanced, breakable,
  colframe=blue!40!black,
  boxrule=0.35pt, arc=1mm,
  title={
    \parbox{7.1cm}{
      \centering \textbf{Misspecification-Reward Alignment Condition}
    }
  },
  coltitle=black, fonttitle=\small\sffamily\bfseries,
  colbacktitle=blue!15!white,
  colback=blue!5!white,
  boxed title style={
    sharp corners, boxrule=0pt,
    top=1pt, bottom=0.5pt, left=0.5mm, right=0.5mm,
    borderline={0.5pt}{0pt}{blue!20!white}
  },
  attach boxed title to top left={xshift=4mm,yshift*=-1.2mm},
  boxsep=1.5mm, top=1.5mm, bottom=1.5mm, left=2.5mm, right=4mm,
  before skip=10pt, after skip=10pt
]

Without loss of generality, we can decompose the expert signal as:
\begin{align}
\label{eq:f_decomposition}
    \textstyle f(\expolicy(y\mid x)) = \alpha\, r(x,y) + h(x,y) + b(x),
\end{align}
where $\alpha>0$, $h:\mathcal X\times\mathcal Y\to\mathbb R$, and $b:\mathcal X\to\mathbb R$.
The term $\alpha r(x,y)$ is the reward-aligned component, with $\alpha$ quantifying the strength of alignment.
The residual $h(x,y)$ captures expert preferences beyond reward, and $b(x)$ is an action-independent offset.
\begin{assumption}
\label{ass:main}
Let $\optimalpolicy \in \policyclass$ be a reference policy. 
For any $\epsilon>0$, assume that for all $\policy \in \policyclass$ satisfying
$\Jon(\policy) \ge \Jon(\optimalpolicy) - \epsilon$,
\begin{align*}
    \E_{x\sim \xdist}
    \big[
        \E_{y\sim \policy(\cdot \mid x)} h(x,y)\
        -
        \E_{y\sim \optimalpolicy(\cdot \mid x)} h(x,y)
    \big]
    \le \delta_\alpha(\epsilon).
\end{align*}
\end{assumption}
\end{tcolorbox}

The decomposition in \cref{eq:f_decomposition} can be made for any fixed $\alpha>0$: for any expert signal $f(\expolicy(y\mid x))$ and any $x$-dependent offset $b_\alpha(x)$, one can define $h(x,y):=f(\expolicy(y\mid x)) - \alpha r(x,y) - b_\alpha(x)$.
Therefore, the decomposition is without loss of generality; the actual assumption lies in the alignment condition on the residual term $h$.
Crucially, Assumption~\ref{ass:main} requires that within the $\epsilon$-optimal set under $\Jon$, the residual term cannot inflate $\Jon(\policy)$ for a reward-suboptimal policy. This condition allows us to translate near-optimality for the online IL objective into a guarantee on the reward performance of the learned policy.

\ifneurips
\begin{theorem}
\label{thm:main}
Under \Cref{ass:main}, let $\policy\in\policyclass$ be any $\epsilon$-suboptimal policy for the online IL objective~\Cref{eq:onlineil}, i.e., $\Jon(\policy)
    \ge
    \max_{\pi' \in \policyclass} \Jon(\pi') - \epsilon$.
Then $\policy$ satisfies: 
\begin{align*}
    \textstyle \val(\policy)
    \ge
    \val(\optimalpolicy)
    - \frac{\epsilon + \delta_\alpha(\epsilon)}{\alpha}.
\end{align*}
\end{theorem}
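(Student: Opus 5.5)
The plan is a direct algebraic comparison. We plug the decomposition \cref{eq:f_decomposition} into $\Jon$ and compare $\policy$ against the reference policy $\optimalpolicy$. By linearity of expectation, for any $\policy'\in\policyclass$,
\begin{align*}
\Jon(\policy') = \alpha \val(\policy') + \E_{x\sim\xdist}\E_{y\sim\policy'(\cdot\mid x)} h(x,y) + \E_{x\sim\xdist} b(x).
\end{align*}
The offset term $\E_x b(x)$ does not depend on the policy, since $b$ depends only on $x$ and $\policy'(\cdot\mid x)$ is a probability distribution. It therefore cancels in any difference of $\Jon$ values.

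The first step is to verify that $\policy$ satisfies the hypothesis of \Cref{ass:main}. Because $\optimalpolicy\in\policyclass$, we have $\max_{\pi'\in\policyclass}\Jon(\pi')\ge \Jon(\optimalpolicy)$. Combined with the $\epsilon$-suboptimality of $\policy$, this gives $\Jon(\policy)\ge \Jon(\optimalpolicy)-\epsilon$. \Cref{ass:main} then yields
\begin{align*}
\E_x\bigl[\E_{y\sim\policy} h - \E_{y\sim\optimalpolicy} h\bigr]\le \delta_\alpha(\epsilon).
\end{align*}

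The second step subtracts the two expansions of $\Jon$:
\begin{align*}
\alpha\bigl(\val(\optimalpolicy)-\val(\policy)\bigr) = \Jon(\optimalpolicy)-\Jon(\policy) + \E_x\bigl[\E_{y\sim\policy} h - \E_{y\sim\optimalpolicy} h\bigr].
\end{align*}
The first difference on the right is at most $\epsilon$ by the first step. The second is at most $\delta_\alpha(\epsilon)$ by \Cref{ass:main}. Dividing by $\alpha>0$ gives $\val(\policy)\ge \val(\optimalpolicy)-(\epsilon+\delta_\alpha(\epsilon))/\alpha$.

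No step is a real obstacle; this is essentially bookkeeping. The only care points are integrability and the comparison baseline. All the expectations above must be finite (e.g.\ $f=\log$ may be $-\infty$ where $\expolicy(y\mid x)=0$), which I would assume implicitly as the paper does. The $\epsilon$-suboptimality is stated relative to the maximum over $\policyclass$, which dominates $\Jon(\optimalpolicy)$ because $\optimalpolicy\in\policyclass$, so the assumption's hypothesis is indeed met.
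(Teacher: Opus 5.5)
Your proposal is correct and matches the paper's proof essentially step for step. Both arguments use $\Jon(\policy)\ge\max_{\pi'\in\policyclass}\Jon(\pi')-\epsilon\ge\Jon(\optimalpolicy)-\epsilon$ to invoke \Cref{ass:main}, expand $\Jon$ via the decomposition so that the $b(x)$ term cancels, bound the two resulting differences by $\epsilon$ and $\delta_\alpha(\epsilon)$, and divide by $\alpha>0$.
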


\else

\begin{tcolorbox}[
  enhanced, breakable,
  colframe=blue!40!black,
  boxrule=0.35pt, arc=1mm,
  title={
    {
      \textbf{Effectiveness of online IL in non-realizable settings}
    }
  },
  coltitle=black, fonttitle=\small\sffamily\bfseries,
  colbacktitle=blue!15!white,
  colback=blue!5!white,
  boxed title style={
    sharp corners, boxrule=0pt,
    top=1pt, bottom=0.5pt, left=0.5mm, right=0.5mm,
    borderline={0.5pt}{0pt}{blue!20!white}
  },
  attach boxed title to top left={xshift=4mm,yshift*=-1.2mm},
  boxsep=1.5mm, top=1.5mm, bottom=1.5mm, left=2.5mm, right=4mm,
  before skip=10pt, after skip=10pt
]
\begin{theorem}
\label{thm:main}
Under \Cref{ass:main}, let $\policy \in \policyclass$ be any policy that is $\epsilon$-suboptimal for the online IL objective~\Cref{eq:onlineil}, i.e.,
$
    \textstyle \Jon(\policy)
    \ge
    \max_{\pi' \in \policyclass} \Jon(\pi') - \epsilon.
$
Then $\policy$ satisfies: \begin{align*}
    \textstyle \val(\policy)
    \ge
    \val(\optimalpolicy)
    - \frac{\epsilon + \delta_\alpha(\epsilon)}{\alpha}.
\end{align*}
\end{theorem}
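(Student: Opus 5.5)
The plan is to expand $\Jon$ using the decomposition in \cref{eq:f_decomposition}. By linearity of expectation, for every $\policy\in\policyclass$,
\begin{align*}
\Jon(\policy) = \alpha\,\val(\policy) + \E_{x\sim\xdist}\E_{y\sim\policy(\cdot\mid x)} h(x,y) + \E_{x\sim\xdist} b(x).
\end{align*}
The offset term $\E_{x} b(x)$ does not depend on $\policy$. It therefore cancels whenever we compare $\Jon$ at two policies, and this is the only place where the action-independence of $b$ is used.

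\textbf{Step 1: reduce to the hypothesis of \Cref{ass:main}.} We have $\optimalpolicy\in\policyclass$, hence $\max_{\pi'\in\policyclass}\Jon(\pi')\ge \Jon(\optimalpolicy)$. The $\epsilon$-suboptimality hypothesis then gives
\begin{align*}
\Jon(\policy)\ge \max_{\pi'\in\policyclass}\Jon(\pi')-\epsilon \ge \Jon(\optimalpolicy)-\epsilon.
\end{align*}
So $\policy$ lies in the set on which \Cref{ass:main} applies. That assumption then yields
\begin{align*}
\E_x\big[\E_{y\sim\policy}h - \E_{y\sim\optimalpolicy}h\big]\le\delta_\alpha(\epsilon).
\end{align*}

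\textbf{Step 2: subtract and rearrange.} Writing out $\Jon(\optimalpolicy)-\Jon(\policy)\le\epsilon$ with the decomposition above gives
\begin{align*}
\alpha\big(\val(\optimalpolicy)-\val(\policy)\big) \le \epsilon + \E_x\big[\E_{y\sim\policy}h-\E_{y\sim\optimalpolicy}h\big] \le \epsilon+\delta_\alpha(\epsilon).
\end{align*}
Dividing by $\alpha>0$ gives $\val(\policy)\ge\val(\optimalpolicy)-(\epsilon+\delta_\alpha(\epsilon))/\alpha$, which is the claim.

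The argument is essentially immediate; the only step needing care is Step 1. We must use $\optimalpolicy\in\policyclass$ to pass from near-optimality relative to $\max_{\pi'}\Jon(\pi')$ to near-optimality relative to $\Jon(\optimalpolicy)$, since \Cref{ass:main} is stated relative to the latter. We must also keep the sign of the $h$-difference consistent with the direction bounded in the assumption. A minor technical point is integrability: we should assume the expectations of $f(\expolicy)$, $h$ and $b$ are finite, for instance when $f=\log$ and $\expolicy$ has full support on the relevant responses. This ensures the linear expansion and the cancellation of $b$ are legitimate.
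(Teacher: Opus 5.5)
Your proposal is correct and follows essentially the same route as the paper's proof. Like the paper, you expand $\Jon$ via the decomposition, use $\optimalpolicy\in\policyclass$ to get $\Jon(\policy)\ge\Jon(\optimalpolicy)-\epsilon$, invoke \Cref{ass:main} to bound the $h$-difference, and then rearrange and divide by $\alpha>0$; your integrability remark is a harmless extra caveat that the paper leaves implicit.
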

\end{tcolorbox}
\fi

The proof of \Cref{thm:main} is provided in \Cref{app:proof-positive}.
To better understand when the Misspecification--Reward Alignment condition (Assumption~\ref{ass:main}) holds, we highlight two representative scenarios.
\begin{enumerate}[leftmargin=*, itemsep=0pt, parsep=0pt, topsep=0pt, partopsep=0pt]

\item 
When the reshaped expert signal $f(\expolicy(y\mid x))$ is uniformly aligned well with the reward $r(x,y)$, Assumption~\ref{ass:main} holds essentially independently of the misspecified student class.
Concretely, if $|h(x,y)|\ll \alpha$ for all $\x \in\xspace, \y \in \yspace$, then the suboptimality bound satisfies $\frac{\epsilon+\delta(\epsilon)}{\alpha}\ll 1$ as $\epsilon\to 0$.


    \item 
Although the expert signal $f(\expolicy(y\mid x))$ encodes preferences beyond the reward, captured by the residual $h(x,y)$, the misspecified student is unable to represent these preferences.
For example, the expert may prefer more concise responses, while any student policy that achieves high performance requires longer chains of reasoning due to limited capability.
In this case, the reward-irrelevant preferences (e.g., response length) in the expert signal do not substantially distinguish among near-optimal policies in the student class.
Formally, for any $\policy_1,\policy_2\in\policyclass$ satisfying
$\Jon(\policy_i)\ge \max_{\pi\in\policyclass}\Jon(\pi)-\epsilon$ for $i=1,2$, we have
\[
\textstyle\left|
\E_{x\sim\xdist}\!\left[
\E_{y\sim \policy_1(\cdot \mid x)} h(x,y)
-
\E_{y\sim \policy_2(\cdot \mid x)} h(x,y)
\right]
\right|
\ll 1.
\]
This regime also explains why the expert signal is effective for $\policyclass_A$ in the 2D Gaussian example.

\end{enumerate}




\ifneurips
We discuss the comparison between \Cref{thm:main} and the classical view of online IL as mitigating error accumulation in Appendix \ref{app:comparison_with_error_accu}.
\else

It is worth noting the comparison between \Cref{thm:main} and the classical view of online IL as mitigating error accumulation. 
A classic account is that online interaction mitigates the compounding of per-step prediction errors over long horizons, leading to DAgger-style algorithms \citep{ross2010efficient, ross2011dagger, lu2025onpolicydistillation}. 
In contrast, our work studies the \emph{information bottleneck} that arises in non-realizable settings, an orthogonal limitation of offline imitation learning that already appears in contextual bandits, i.e., when $H=1$. 
We further characterize how the reward function interacts with the misspecification structure to enable online imitation learning in non-realizable settings, a perspective absent from previous DAgger-style analyses.
\fi

\ifneurips
\noindent\textbf{Finite-Sample Guarantee.}
\else
\paragraph{Finite-Sample Guarantee.}
\fi
\ifneurips
The efficient training of online and on-policy algorithms for LLM post-training (including RL and on-policy distillation) crucially relies on a strong base model to obtain positive feedback with nontrivial probability \citep{deepseekai2025deepseekr1incentivizingreasoningcapability, zeng2025simplerl}. This requirement can be characterized theoretically through \emph{coverage coefficients} defined with respect to the base policy \citep{xieexploratory,huang2025sharpening, foster2025good, huangcorrecting}:
\else
The efficient training of online and on-policy algorithms for LLM post-training (including RL and on-policy distillation) crucially relies on a strong base model, so that the model can receive positive feedback with nontrivial probability \citep{deepseekai2025deepseekr1incentivizingreasoningcapability, zeng2025simplerl}. This requirement can be characterized theoretically through \emph{coverage coefficients} defined with respect to the base policy (e.g., \citet{xieexploratory,huang2025sharpening, foster2025good, huangcorrecting}):
\fi
\ifneurips
\begin{align*}
    \textstyle \basecoveragestar(\optimalpolicy)&\textstyle \coloneqq \sup_{\x\in\xspace,\y\in\yspace} \frac{\optimalpolicy(\y|\x)}{\basepolicy(\y|\x)}, \ \basecoverage_2(\optimalpolicy)\coloneqq \mathbb E_{\x \sim \xdist} \mathbb E_{\y\sim \basepolicy} \left(\frac{\optimalpolicy(\y|\x)}{\basepolicy(\y|\x)}\right)^2,
\end{align*}
\else
\begin{align*}
    \basecoveragestar(\optimalpolicy)&\textstyle \coloneqq \sup_{\x\in\xspace,\y\in\yspace} \frac{\optimalpolicy(\y|\x)}{\basepolicy(\y|\x)}, \ \basecoverage_2(\optimalpolicy)\coloneqq \mathbb E_{\x \sim \xdist} \mathbb E_{\y\sim \basepolicy} \left(\frac{\optimalpolicy(\y|\x)}{\basepolicy(\y|\x)}\right)^2,
\end{align*}
\fi
where $\optimalpolicy\in\policyclass$ is a reference policy (e.g., an optimal student policy).
\ifneurips
Using standard pessimism techniques \citep{swaminathan2015self, wang2024oracle, gabbianelli2024importance}, one can derive a finite-sample result for the objective \Cref{eq:onlineil} under this condition. We state the result below and defer details and proof to  \Cref{app:finite-sample}.
\else

Besides coverage, we also require a boundedness condition on the
shaped expert signal, which plays the same role as the standard bounded-reward
condition in finite-sample RL and off-policy evaluation analyses.

\begin{assumption}[Bounded shaped expert signal]
\label{ass:bounded-signal}
For all $\x\in\xspace$ and $\y\in\yspace$,
\[
    |f(\expolicy(\y\mid\x))| \leq \vmax .
\]
\end{assumption}

Using standard pessimism techniques \citep{swaminathan2015self, wang2024oracle, gabbianelli2024importance}, one can extract a finite-sample result for the objective \Cref{eq:onlineil} under such a condition. Below we present the finite-sample guarantee, and defer details and proof to  \Cref{app:finite-sample}.
\fi

\ifneurips
\begin{theorem}
Assume for any $\x\in\xspace, \y\in\yspace$,  $|f(\expolicy(\y\mid\x))| \leq \vmax$.  
For a base model $\basepolicy$, a reference student policy $\optimalpolicy$, with probability $1-\delta$, the learned policy $\hat \pi$ in \cref{eq:ipw} satisfies
\begin{equation*}
\begin{aligned}
    \textstyle \Jon(\optimalpolicy)-\Jon(\policyhat) \lesssim  
     \sqrt{\frac{\basecoverage_2(\optimalpolicy)V_{\max}^2\log (|\policyclass|/\delta)}{N}} +  \frac{\basecoverage_\infty(\optimalpolicy)V_{\max}\log (|\policyclass|/\delta)}{N}.
\end{aligned}
\vspace{-3mm}
\end{equation*}
\label{thm:finite-sample}
\end{theorem}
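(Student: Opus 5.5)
The plan is to analyze the pessimistic inverse-propensity-weighted estimator referenced as \cref{eq:ipw}. I take it to be the following. Draw $N$ samples $x_i\sim\xdist$ and $y_i\sim\basepolicy(\cdot\mid x_i)$, query $\expolicy(y_i\mid x_i)$, and form for each $\policy\in\policyclass$ the importance-weighted estimate
\[
\hatJon(\policy)=\frac1N\sum_{i=1}^N \frac{\policy(y_i\mid x_i)}{\basepolicy(y_i\mid x_i)} f(\expolicy(y_i\mid x_i)).
\]
The algorithm then outputs $\policyhat$ maximizing $\hatJon(\policy)-\mathrm{pen}(\policy)$. The penalty is variance-based, of order $\sqrt{\widehat{\mathrm{Var}}(\policy)\log(|\policyclass|/\delta)/N}$, or a clipped/smoothed weighting in the spirit of \citet{swaminathan2015self,gabbianelli2024importance}.

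The estimator is unbiased: $\E[\hatJon(\policy)]=\Jon(\policy)$. For each fixed $\policy$, write $w_\policy=\policy/\basepolicy$ and $Z_i^\policy=w_\policy(y_i\mid x_i)f(\expolicy(y_i\mid x_i))$. By \Cref{ass:bounded-signal}, its second moment is at most $V_{\max}^2\,\E[w_\policy^2]$, i.e.\ $V_{\max}^2\basecoverage_2(\policy)$. Its range is at most $V_{\max}\basecoveragestar(\policy)$.

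The key step is one-sided concentration, used asymmetrically.
\begin{enumerate}
\item For the reference policy $\optimalpolicy$, apply Bernstein's inequality to get $\hatJon(\optimalpolicy)\ge \Jon(\optimalpolicy)-O\big(\sqrt{\basecoverage_2(\optimalpolicy)V_{\max}^2\log(1/\delta)/N}+\basecoveragestar(\optimalpolicy)V_{\max}\log(1/\delta)/N\big)$.
\item For every other $\policy\in\policyclass$, control the upward deviation $\hatJon(\policy)-\Jon(\policy)$ uniformly via a union bound over $\policyclass$.
\end{enumerate}

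The main obstacle is step 2. The weights of an arbitrary $\policy$ may be unbounded, and only $\optimalpolicy$'s coverage appears in the bound. I would handle this with pessimism: choose the penalty (or a truncation such as $\min\{w,\lambda\}$, or implicit exploration in the style of \citet{gabbianelli2024importance}) so that, with probability $1-\delta$ simultaneously for all $\policy$, $\hatJon(\policy)-\mathrm{pen}(\policy)\le \Jon(\policy)$.

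For the truncation route, the truncated estimator underestimates when $f\ge0$, and a constant shift makes $f\ge 0$ without loss of generality. The upper tail of a nonnegative variable admits a Bernstein/Freedman-type bound whose variance term is the empirical second moment. That variance is exactly what the penalty uses, so the overestimation is absorbed with no dependence on the coverage of $\policy$.

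On the same event, the penalty evaluated at $\optimalpolicy$ is at most of order $\sqrt{\basecoverage_2(\optimalpolicy)V_{\max}^2\log(|\policyclass|/\delta)/N}+\basecoveragestar(\optimalpolicy)V_{\max}\log(|\policyclass|/\delta)/N$. This follows by concentrating the empirical second moment of the $\optimalpolicy$-weights around $\basecoverage_2(\optimalpolicy)$, whose summands are bounded by $\basecoveragestar(\optimalpolicy)V_{\max}^2$.

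Combining the pieces:
\[
\Jon(\policyhat)\ge \hatJon(\policyhat)-\mathrm{pen}(\policyhat)\ge \hatJon(\optimalpolicy)-\mathrm{pen}(\optimalpolicy)\ge \Jon(\optimalpolicy)-2\,\mathrm{pen}(\optimalpolicy)-\text{(Bernstein term)}.
\]
This gives the stated rate. Finally, the bias of any truncation at $\optimalpolicy$ is controlled by choosing the truncation level $\lambda\gtrsim\basecoveragestar(\optimalpolicy)$, or by a data-independent level balanced against the $1/N$ term.
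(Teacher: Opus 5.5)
Your pessimism skeleton matches the paper's: uniform one-sided control over $\policyclass$, a lower deviation bound at $\optimalpolicy$, then the argmax chain. But the estimator in \cref{eq:ipw} is not the one you guess, and that difference is what makes your step 2 look hard. The paper's penalty is the deterministic per-policy Bernstein width
\[
b(\policy)=\sqrt{\frac{2\basecoverage_2(\policy)V_{\max}^2\log(2|\policyclass|/\delta)}{N}}+\frac{2\basecoverage_\infty(\policy)V_{\max}\log(2|\policyclass|/\delta)}{3N},
\]
built from $\policy$'s own coverage coefficients and computable without querying the expert. Since $|\Zi|\le V_{\max}\basecoverage_\infty(\policy)$ and $\mathrm{Var}(\Zi)\le V_{\max}^2\basecoverage_2(\policy)$, a two-sided Bernstein bound for each fixed $\policy$ plus a union bound gives $|\hatJon(\policy)-\Jon(\policy)|\le b(\policy)$ for all $\policy\in\policyclass$ on a single event. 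Hence $\Jon(\policyhat)\ge\hatJon(\policyhat)-b(\policyhat)\ge\hatJon(\optimalpolicy)-b(\optimalpolicy)\ge\Jon(\optimalpolicy)-2b(\optimalpolicy)$. The obstacle you single out never arises: a poorly covered $\policy$ just pays a large penalty through its own range term, and only $b(\optimalpolicy)$ survives at the end. Letting the penalty carry each policy's own range $V_{\max}\basecoverage_\infty(\policy)$ is the idea your proposal is missing.

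Without it, the data-dependent fixes you sketch do not yet give the stated result.

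\emph{The shift is not without loss of generality.} Shifting $f$ by a constant $c$ changes $\hatJon(\policy)$ by $\frac{c}{N}\sum_i \policy(y_i\mid x_i)/\basepolicy(y_i\mid x_i)$, which is random and $\policy$-dependent. You are then analyzing a different algorithm. That is legitimate, since $\Jon$ shifts uniformly, but it is not \cref{eq:ipw}.

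\emph{The range-free tail bound needs a fixed parameter.} For nonnegative variables it reads $\bar Z-\E Z\le\frac{\eta}{2}\cdot\frac1N\sum_i Z_i^2+\frac{\log(1/\delta)}{\eta N}$ for a \emph{fixed} $\eta>0$. A penalty of order $\sqrt{\widehat{\mathrm{Var}}(\policy)\log(|\policyclass|/\delta)/N}$ requires choosing $\eta$ from the data. That needs a grid or peeling union bound, which adds a $\log\log N$-type term you do not account for. A data-independent $\eta$ must instead be tuned to coverage coefficients, which is exactly the information $b$ already uses.

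\emph{Truncation needs oracle knowledge.} Truncating at $\lambda\gtrsim\basecoverage_\infty(\optimalpolicy)$ makes the algorithm depend on the unknown reference policy. The paper's $\policyhat$ is reference-free, and its guarantee holds for every $\optimalpolicy\in\policyclass$ at once.

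\emph{Minor.} $(Z_i^{\optimalpolicy})^2\le\basecoverage_\infty(\optimalpolicy)^2V_{\max}^2$, not $\basecoverage_\infty(\optimalpolicy)V_{\max}^2$. Your bound on $\mathrm{pen}(\optimalpolicy)$ still follows by Bernstein and AM--GM.

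If completed, your route would buy a penalty computable purely from data. The paper gives that up in exchange for a short argument with exactly the stated rate.
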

\else
\begin{theorem}
Suppose \cref{ass:bounded-signal}.
For a base model $\basepolicy$, a reference student policy $\optimalpolicy$, with probability $1-\delta$, the learned policy $\hat \pi$ in \cref{eq:ipw} satisfies
\begin{equation*}
\begin{aligned}
    \Jon(\optimalpolicy)-\Jon(\policyhat) \lesssim  
     \sqrt{\frac{\basecoverage_2(\optimalpolicy)V_{\max}^2\log (|\policyclass|/\delta)}{N}} +  \frac{\basecoverage_\infty(\optimalpolicy)V_{\max}\log (|\policyclass|/\delta)}{N}.
\end{aligned}
\end{equation*}
\label{thm:finite-sample}
\end{theorem}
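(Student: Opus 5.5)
The estimator in \cref{eq:ipw} is not shown here. I will assume it is a pessimistic inverse-propensity-weighted estimator. The data are $N$ pairs $(x_i,y_i)$ with $x_i\sim\xdist$ and $y_i\sim\basepolicy(\cdot\mid x_i)$, each queried for $f(\expolicy(y_i\mid x_i))$. The unbiased estimate is
\[
\hatJon(\policy)=\frac1N\sum_{i=1}^N w_\policy(x_i,y_i)\, f(\expolicy(y_i\mid x_i)), \qquad w_\policy(x,y)=\frac{\policy(y\mid x)}{\basepolicy(y\mid x)} .
\]
The learner maximizes $\hatJon(\policy)-\beta\,\widehat{\mathrm{Pen}}(\policy)$. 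Here $\widehat{\mathrm{Pen}}$ is a data-dependent variance or second-moment penalty, for instance $\widehat{\mathrm{Pen}}(\policy)=\sqrt{\tfrac1N\sum_i w_\policy^2 V_{\max}^2\log(|\policyclass|/\delta)/N}$ or a self-normalized/implicit-exploration variant.

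The proof follows the standard pessimism template in three steps.

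\textbf{Step 1 (two-sided concentration, uniformly over $\policyclass$).} Fix $\policy$. The summands $Z_i^\policy=w_\policy f$ are i.i.d. with mean $\Jon(\policy)$ and second moment at most $V_{\max}^2\,\E_{x}\E_{y\sim\basepolicy}w_\policy^2$, using Assumption~\ref{ass:bounded-signal}. They may be unbounded for general $\policy$. For the upper deviation we only need a one-sided bound. Since $f$ may be negative, I would work with the lower tail of $Z$ via a Bernstein-type inequality for nonnegative-weighted variables. Concretely, I would use $\E e^{-\lambda X}\le e^{-\lambda\E X+\lambda^2\E X^2/2}$, applied to $Z+V_{\max}w\ge 0$ and to the $w$ term separately. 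This gives, with probability $1-\delta$ and simultaneously for all $\policy$ by a union bound,
\[
\Jon(\policy)\le \hatJon(\policy)+ c\sqrt{\frac{\E w_\policy^2 V_{\max}^2\log(|\policyclass|/\delta)}{N}}+c\,\frac{V_{\max}\log(|\policyclass|/\delta)}{N}.
\]
The penalty must dominate this deviation with an empirical proxy for $\E w_\policy^2$. Relating empirical and population second moments again needs only a one-sided (lower) bound on the empirical second moment, which holds by nonnegativity.

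\textbf{Step 2 (concentration at the reference $\optimalpolicy$).} For $\optimalpolicy$ the weights are bounded by $\basecoveragestar(\optimalpolicy)$, so the standard Bernstein inequality gives two-sided control:
\[
|\hatJon(\optimalpolicy)-\Jon(\optimalpolicy)|\lesssim\sqrt{\frac{\basecoverage_2 V_{\max}^2\log(1/\delta)}{N}}+\frac{\basecoverage_\infty V_{\max}\log(1/\delta)}{N}.
\]
The empirical penalty at $\optimalpolicy$ concentrates similarly. The summands $w^2$ are bounded by $\basecoverage_\infty w$, so $\frac1N\sum_i w_{\optimalpolicy}^2\lesssim\basecoverage_2+\basecoverage_\infty\log(1/\delta)/N$.

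\textbf{Step 3 (chain).} Combine the three facts $\Jon(\policyhat)\ge \hatJon(\policyhat)-\mathrm{Pen}(\policyhat)$, the optimality of $\policyhat$ for the penalized objective, and $\hatJon(\optimalpolicy)-\mathrm{Pen}(\optimalpolicy)\ge\Jon(\optimalpolicy)-2\,\mathrm{Pen}(\optimalpolicy)-\text{dev}$. Using $\sqrt{a+b}\le\sqrt a+\sqrt b$ then yields the stated bound.

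The main obstacle is Step 1. Weights for arbitrary $\policy\in\policyclass$ are unbounded, so the pessimism must rely only on one-sided (lower-tail) concentration with a data-driven variance proxy. The sign indefiniteness of $f$ must also be handled, either by shifting $f$ by $V_{\max}$ (which introduces the $w$-mass term, itself controlled one-sidedly) or by appealing to the self-normalized bounds of \citet{gabbianelli2024importance,wang2024oracle}.
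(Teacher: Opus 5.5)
There is a genuine gap, and it sits in your Step 1. The first fact in your Step 3 chain is $\Jon(\policyhat)\ge \hatJon(\policyhat)-\mathrm{Pen}(\policyhat)$. Because $\policyhat$ is data-dependent, this requires that the estimator does not \emph{overestimate}, uniformly over $\policyclass$: $\hatJon(\policy)\le \Jon(\policy)+\mathrm{Pen}(\policy)$ for all $\policy$. Your Step 1 proves the opposite inequality, $\Jon(\policy)\le\hatJon(\policy)+\ldots$, which is the no-underestimation direction. That direction is only needed at $\optimalpolicy$, which your Step 2 already covers.

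The direction you actually need is the upper tail of $\frac{1}{N}\sum_i w_\policy(x_i,y_i)\, f(\expolicy(y_i\mid x_i))$. Where $f>0$, this is the upper tail of an unbounded nonnegative variable. That is exactly the tail the bound $\E e^{-\lambda X}\le e^{-\lambda \E X+\lambda^2\E X^2/2}$ does not control. The same issue hides in your ``$w$ term handled separately'', which needs an upper bound on $\frac{1}{N}\sum_i w_\policy(x_i,y_i)$.

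With a purely sample-based penalty, the algorithm itself can fail. Take $f\le 0$ (e.g.\ $f=\log\expolicy$), and suppose some $\policy\in\policyclass$ puts its mass on responses that $\basepolicy$ almost never samples. With high probability every weight of $\policy$ on the sample is $0$, so $\hatJon(\policy)=0$ and $\widehat{\mathrm{Pen}}(\policy)=0$, while $\Jon(\policy)$ can be $-V_{\max}$. This $\policy$ is preferred to $\optimalpolicy$ as soon as $\hatJon(\optimalpolicy)-\widehat{\mathrm{Pen}}(\optimalpolicy)<0$, yet $\Jon(\optimalpolicy)-\Jon(\policy)$ can stay of order $V_{\max}$ for every $N$. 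So a valid penalty must see the coverage of the candidate policy, not just the sample.

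This is what the estimator in \cref{eq:ipw} does, and your guess of it is where the proof went astray. Its penalty $b(\policy)$ is the deterministic Bernstein width built from the population quantities $\basecoverage_2(\policy)$ and $\basecoverage_\infty(\policy)$, which can be computed without querying the expert. With it, your ``main obstacle'' disappears:

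\begin{itemize}
\item For each $\policy$, $|Z_i^\policy|\le V_{\max}\basecoverage_\infty(\policy)$ and $\Var(Z_i^\policy)\le V_{\max}^2\basecoverage_2(\policy)$.
\item A two-sided Bernstein bound at level $\delta/|\policyclass|$ plus a union bound gives $|\hatJon(\policy)-\Jon(\policy)|\le b(\policy)$ for all $\policy$ simultaneously. This holds vacuously when $\basecoverage_\infty(\policy)=\infty$, and such policies are never selected.
\item Then $\Jon(\policyhat)\ge\hatJon(\policyhat)-b(\policyhat)\ge\hatJon(\optimalpolicy)-b(\optimalpolicy)\ge\Jon(\optimalpolicy)-2b(\optimalpolicy)$, which is the claim.
\end{itemize}

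Your Step 3 skeleton is the right one. What has to go is the guessed data-driven penalty and the one-sided concentration argument built around it.
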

\fi

Compared to the lower bound for offline IL (\Cref{thm:lowerboundoffline}), the coverage dependence improves from $\excoverage$ to $\basecoveragestar(\optimalpolicy)$.
Since the base policy is the model before fine-tuning, it is expected to cover policies reachable during training better than an external expert. 
While a strong base model appears necessary for efficient online algorithms, it remains open whether its role can be characterized by a finer notion than these coverage coefficients.

\ifneurips
\else

\fi
\ifneurips
We also note that the hard instances in \cref{thm:lowerboundoffline} can also be constructed to satisfy \cref{ass:main} and the $V_{\max}$-bounded signal condition, yielding a clean offline--online separation within the same regime. See \cref{rem:separation} for further discussion.
\else
We also note that the hard instances in \cref{thm:lowerboundoffline} can also be constructed to satisfy \cref{ass:main} and \cref{ass:bounded-signal}, yielding a clean offline--online separation within the same regime. See \cref{rem:separation} for further discussion.
\fi

\ifneurips
\noindent\textbf{Empirical Validation.}
\else
\paragraph{Empirical Validation.}
\fi
We further provide an empirical validation of the misspecification--reward alignment condition (\cref{ass:main}). Following the setup of \citet{li2026rethinkingopd}, we start from the same base model, DeepSeek-R1-Distill-Qwen-1.5B \citep{deepseekai2025deepseekr1incentivizingreasoningcapability}, and perform on-policy distillation with two different experts: DeepSeek-R1-Distill-Qwen-7B (R1-Distill-7B) \citep{deepseekai2025deepseekr1incentivizingreasoningcapability} and Skywork-OR1-Math-7B (Skywork-7B) \citep{he2025skywork}. On AIME 2024 and AIME 2025, distillation from Skywork-7B improves the student's average performance from 25.0\% to 32.8\%, whereas distillation from R1-Distill-7B improves it to 27.1\% (averaged over three runs).


To empirically assess \cref{ass:main}, we estimate the suboptimality bound in
\cref{thm:main}. The decomposition in \cref{eq:f_decomposition} is not unique: each choice of
\(\alpha > 0\) induces a residual upper bound \(\delta_\alpha(\epsilon)\). Since the bound
in \cref{thm:main} depends on this choice through
\((\epsilon+\delta_\alpha(\epsilon))/\alpha\), we further minimize the estimated bound over $\alpha$.
We estimate the bound using the following procedure. 
For each expert, we perform three on-policy distillation runs.  We collect checkpoints from all distillation trajectories across the two experts, with checkpoints saved every 10 steps, and use their union as a proxy policy class ($\widehat{\Pi}$).
We take the checkpoint with the highest empirical reward as
the reference policy \(\widehat{\pi}^{\star}\). Then, for each expert separately and for each \(\epsilon > 0\), we
consider the set of checkpoints whose empirical online IL objective is within
\(\epsilon\) of that of \(\widehat{\pi}^{\star}\):
$\widehat{\Pi}_\epsilon:=
\left\{
\pi \in \widehat{\Pi}:
\widehat{J}_{\mathrm{on}}(\pi)
\ge
\max _{\pi'\in {\widehat \Pi}}\widehat{J}_{\mathrm{on}}(\pi' )-\epsilon
\right\}
$.
We estimate the residual upper bound $\delta_\alpha(\epsilon)$ in \cref{ass:main} over $\widehat{\Pi}_\epsilon$ by Monte Carlo, and denote the resulting estimate by $\widehat{\delta}_\alpha(\epsilon)$.
We then report the estimated bound
\[
\widehat{\Delta}(\epsilon)
:=
\min_{\alpha>0}
\frac{\epsilon+\widehat{\delta}_\alpha(\epsilon)}{\alpha}.
\]
Details are provided in \Cref{app:detail_onlineil_condition}.

As shown in \Cref{fig:minimax-suboptimality}, Skywork-7B yields a much smaller estimated bound than R1-Distill-7B across~$\epsilon$, consistent with its larger distillation improvement. The relatively large range of $\epsilon$ is expected, since the online IL objective is based on sequence-level log probabilities, whose scale grows approximately linearly with response length.

\section{Conclusion and Discussion}
\label{sec:discussion}
\ifneurips
In this work, we show that misspecification is a key factor driving the empirical performance gap between offline and online IL methods in LLM post-training.
Empirically, under realizability, offline IL already recovers expert performance, leaving no room for online interaction to help. Thus, the gap often observed between SFT and online methods such as RLVR or on-policy distillation need not reflect an inherent limitation of SFT. Instead, it can arise from the offline data distribution induced by a non-realizable expert, which may be poorly suited for the target student class.
On the theoretical front, we characterize an information-theoretic limitation of offline IL in non-realizable settings, and provide structural conditions under which online IL can still be effective even when the discrepancy between expert and student is large. This perspective complements prior explanations based on error accumulation. 
One question beyond our current framework is why on-policy distillation can be more efficient than RLVR. Addressing this may require analyzing the multi-step structure of language generation and the role of dense reward signals.
\else
In this work, we show that misspecification is a key factor driving the empirical performance gap between offline and online IL methods in LLM post-training. Our results help clarify what leads to the empirical performance gap between offline and online IL methods in LLM post-training, and suggest several directions that may be worth exploring in future work.
Empirically, in realizable settings, offline IL already recovers expert performance, leaving no room for online interaction to help. In this sense, the gap often observed between SFT and online methods such as RLVR or on-policy distillation does not necessarily come from some inherent algorithm properties. Instead, it can arise from the offline data distribution induced by a non-realizable expert, which may be poorly suited for the target student class. 

On the theoretical front, we characterize an information-theoretic limitation of offline IL under misspecification, and provide structural conditions under which online IL can still be effective even when the discrepancy between expert and student is large. This perspective complements prior explanations based on error accumulation. An important question not fully explained by the current framework is why on-policy distillation can empirically achieve better training efficiency than RLVR. Answering this question may require analyzing the multi-step structure of language generation and characterizing how dense reward signals improve learning efficiency in this setting.
\fi

\ificml
\newpage
\section*{Impact Statement}
This work focuses on an in-principle understanding of the role of online interaction in imitation learning. We do not anticipate any specific broader impacts that require special emphasis.
\else 
\fi

\section*{Acknowledgments}

The authors thank Audrey Huang, Abhishek Panigrahi, Aditi Raghunathan, Dhruv Rohatgi, Kaiyue Wen and Chen Wu for valuable discussions and feedback. B.L. is supported by the Kempner Fellowship from the Kempner Institute at Harvard. A.R. and J.G. are supported in part by NSF awards IIS-2211907, CCF-2238523, IIS-2403275, an Amazon Research
Award, ONR award N000142512124, a Google Research Scholar Award, and an OpenAI Superalignment Fast Grant.

\bibliography{references}
\bibliographystyle{icml2026}

\newpage
\appendix
\section{Proofs and Additional Results}

\subsection{Proof of \Cref{thm:lowerboundoffline}}
\label[app]{app:proof-lowerbound}

Without loss of generality, assume that $K := |\Pi| = 2^l$ for some
$l \in \mathbb N^*$. If not, let
$K' = 2^{\lfloor \log_2 K \rfloor} \le K$. We construct the hard instance
using only $K'$ policies and augment the policy class with $K-K'$ dummy
policies. Since the original hard instance is embedded in this larger class,
this augmentation cannot decrease the minimax risk.

To prove the lower bound of the optimal suboptimality, we construct a joint distribution $\mathcal P$ over instances
$\cbinstance$ and experts $\expolicy$ such that for any offline learner $\hat\pi(D)$
based on $N$ i.i.d.\ expert demonstrations $D$,
\begin{align*}
\E_{(\expolicy, \cbinstance)\sim \mathcal P} 
    \left[
    1-\mathbb E\left[\val (\hat \pi )\right ] 
    \right]
    \gtrsim 
    \min\!\left\{1,\ \frac{\excoverage \log |\policyclass|}{N}\right\}.
\end{align*}


Let $\xspace=\{x_1,\ldots,x_l\}$ and $\yspace=\{y_1,y_2,y_3\}$. 
Define the student policy class $\Pi$ to consist of deterministic policies that,
for each context $x_i$, choose either $y_2$ or $y_3$:
\[
\forall i\in[l], \ \pi(\cdot\mid x_i)\in\{\delta_{y_2},\delta_{y_3}\}.
\]
Thus $|\Pi|=2^l=K$. Let the context distribution $\xdist$ satisfy
$\xdist(x_i)=\zeta $ for $i\le l-1$ and $\xdist(x_l)=1-(l-1)\zeta$. Moreover, let $\mathcal S := \{y_2,y_3\}^l$, and index environments by $s=(s_1,\ldots,s_l)\in\mathcal S$.
For each $s\in\mathcal S$, define the expert policy $\expolicy_s$ by, for all $i\in[l]$,
\[
\expolicy_s(\cdot \mid x_i) \;=\; \Big(1-\tfrac{1}{\excoverage}\Big)\delta_{y_1} \;+\; \tfrac{1}{\excoverage}\,\delta_{s_i}.
\]
Define the reward function $r_s$ by
\[
r_s(x_i,y) \;=\; \mathbf{1}\!\left[y=y_1 \ \text{or}\ y=s_i\right].
\]
The optimal student policy is $\optimalpolicy_s(\cdot\mid x_i)=\delta_{s_i}$ for all $i\in[l]$.

Let $\mathcal P$ denote the uniform prior over instances induced by $\mathcal S$: draw $S\sim\mathrm{Unif}(\mathcal S)$ and set the instance to be $(\xdist,r_S,\expolicy_S)$.
Let $\xspace(D)$ be the context for which the secret action $s_i$ is covered by i.i.d. samples in dataset $D$. Then 
\begin{align*}
    \E_{(\expolicy, \cbinstance)\sim \mathcal P }
    \E[1-\val(\policyhat)]
    &=
    \E\E_{(\expolicy, \cbinstance)\sim \mathcal P }[1-\val(\policyhat)]\\
    & \geq
    \E\E_{(\expolicy, \cbinstance)\sim \mathcal P }
    \left[\frac \zeta 2
    \sum_{i=1}^ {l-1}\mathbf 1\left[i\not \in \xspace(D)\right]
    \right]\\
    &  \geq
    \frac 12 \mathbb E \left[
    \sum_{i=1}^{l-1}\zeta \Pr\left[i\not \in \xspace(D)\right]
    \right].
\end{align*}

For each $i\le l-1$, the probability that $i$ becomes covered in one sample is
$\Pr(x=x_i, y\neq y_1)=\zeta/\excoverage$. Setting $\zeta =\min \{\frac 1l, \excoverage/N\}$ gives
\[
\Pr(i\notin\xspace(D))=\left (1-\frac{\zeta }{\excoverage}\right)^N \gtrsim 1,
\]
and therefore
\begin{align*}
    \E_{(\expolicy, \cbinstance)\sim \mathcal P }
    \E[1-\val(\policyhat)] \gtrsim \zeta l.
\end{align*}
Plugging back yields
\[ 
\E_{(\pi^e,\cbinstance)\sim\mathcal P}\E_D\!\left[1-\val(\hat\pi(D))\right]
\gtrsim \zeta l 
\asymp \min \left\{1,\frac{\excoverage  \log |\policyclass|}{N }\right\}
\]
as was to be shown. \qed

\subsection{Proof of \Cref{thm:main}}
\label[app]{app:proof-positive}

Fix any $\policy \in \policyclass$ that is $\epsilon$-suboptimal for the online IL objective~\cref{eq:onlineil}, i.e.,
\begin{align*}
\Jon(\policy) \ge \max_{\policy'\in \policyclass} \Jon(\policy')-\epsilon
\geq \Jon(\optimalpolicy) - \epsilon.
\end{align*}
By Assumption \ref{ass:main}, there exist $\alpha>0$, $h$, and $b$ such that for all $(x,y)$,
\[
f(\expolicy(y\mid x)) = \alpha\, r(x,y) + h(x,y) + b(x).
\]
Taking expectation over $x\sim\xdist$ and $y\sim \policy(\cdot\mid x)$ yields
\begin{align*}
\Jon(\policy)
&= \E_{x\sim\xdist}\E_{y\sim\policy(\cdot\mid x)} f(\expolicy(y\mid x))\\
&= \alpha\, \E_{x\sim\xdist}\E_{y\sim\policy(\cdot\mid x)} r(x,y)
  + \E_{x\sim\xdist}\E_{y\sim\policy(\cdot\mid x)} h(x,y)
  + \E_{x\sim\xdist} b(x)\\
&= \alpha\, \val(\policy)
  + \E_{x\sim\xdist}\E_{y\sim\policy(\cdot\mid x)} h(x,y)
  + \E_{x\sim\xdist} b(x).
\end{align*}
The same decomposition holds for $\optimalpolicy$:
\[
\Jon(\optimalpolicy)
= \alpha\, \val(\optimalpolicy)
  + \E_{x\sim\xdist}\E_{y\sim\optimalpolicy(\cdot\mid x)} h(x,y)
  + \E_{x\sim\xdist} b(x).
\]
Subtracting the two identities cancels the $b(x)$ term and gives
\begin{align*}
\Jon(\policy) - \Jon(\optimalpolicy)
= \alpha\big(\val(\policy)-\val(\optimalpolicy)\big)
+ \E_{x\sim\xdist}\!\Big[
\E_{y\sim\policy(\cdot\mid x)} h(x,y)
-\E_{y\sim\optimalpolicy(\cdot\mid x)} h(x,y)
\Big].
\end{align*}
Rearranging,
\begin{align*}
\alpha\big(\val(\optimalpolicy)-\val(\policy)\big)
=
\Jon(\optimalpolicy)-\Jon(\policy)
+ \E_{x\sim\xdist}\!\Big[
\E_{y\sim\policy(\cdot\mid x)} h(x,y)
-\E_{y\sim\optimalpolicy(\cdot\mid x)} h(x,y)
\Big].
\end{align*}
Now use $\Jon(\policy)\ge \Jon(\optimalpolicy)-\epsilon$, which implies
$\Jon(\optimalpolicy)-\Jon(\policy)\le \epsilon$.
Moreover, since $\Jon(\policy)\ge \Jon(\optimalpolicy)-\epsilon$, the second part of
\cref{ass:main} applies and yields
\[
\E_{x\sim\xdist}\!\Big[
\E_{y\sim\policy(\cdot\mid x)} h(x,y)
-\E_{y\sim\optimalpolicy(\cdot\mid x)} h(x,y)
\Big]\le \delta(\epsilon).
\]
Combining these two bounds,
\[
\alpha\big(\val(\optimalpolicy)-\val(\policy)\big)
\le \epsilon + \delta(\epsilon),
\]
and dividing by $\alpha>0$ finishes the proof. \qed

\ifneurips

\subsection{Comparison between \Cref{thm:main} and the classical view of online IL as mitigating error accumulation}
\label{app:comparison_with_error_accu}
A classic account is that online interaction mitigates the compounding of per-step prediction errors over long horizons, leading to DAgger-style algorithms \citep{ross2010efficient, ross2011dagger, lu2025onpolicydistillation}. 
In contrast, our work studies the \emph{information bottleneck} that arises in non-realizable settings, an orthogonal limitation of offline imitation learning that already appears in contextual bandits, i.e., when $H=1$. 
We further characterize how the reward function interacts with the misspecification structure to enable online imitation learning in non-realizable settings, a perspective absent from previous DAgger-style analyses.
\fi

\subsection{Finite Sample Guarantee of Learning Objective \Cref{eq:onlineil}}
\label[app]{app:finite-sample}

The online IL objective \cref{eq:onlineil} is typically optimized in an online and on-policy manner in LLM post-training, as in on-policy distillation \citep{gu2023minillm, lu2025onpolicydistillation}, where the learner adaptively generates responses according to the current student policy (similarly to reinforcement learning).  In practice, it is often optimized in an RL framework.

The sample complexity analysis of on-policy algorithms for a general policy class is challenging. 
Existing results typically require additional assumptions, such as reverse-KL regularization and realizability of the policy class, or study simplified policy classes  \citep{xieexploratory, foster2025good, kim2026coverage}. 
Other online contextual-bandit results study indirect learning formulations, where policies are learned through reward/value-function estimation rather than direct policy learning \citep{foster2018practical, foster2020beyond, simchi2022bypassing}.
These results do not directly apply to our setting, where the learner optimizes over a general policy class. Therefore, we instead present a simpler algorithm based on \emph{pessimistic policy optimization} \citep{swaminathan2015self, wang2024oracle, gabbianelli2024importance}, which only samples from the base policy and queries expert feedback, yet already yields meaningful improvements over offline imitation learning. 

Concretely, the learner draws $N$ i.i.d.\ samples with
$x_i \sim \xdist$ and $y_i \sim \basepolicy(\cdot \mid x_i)$, and queries the expert signal $f(\expolicy(y_i\mid x_i))$.
For any policy $\policy \in \policyclass$, define the inverse-propensity weighted estimator
\begin{align*}
 \hatJon(\policy)
\;\coloneqq\;
\frac{1}{N}\sum_{i=1}^N
\frac{\policy(y_i \mid x_i)}{\basepolicy(y_i \mid x_i)}\, f(\expolicy(\y_i\mid \x_i)),
\end{align*}
and the regularizer 

\begin{align*}b(\pi)\coloneqq\sqrt{\frac{2\basecoverage_2(\policy)V_{\max}^2\log (2|\policyclass|/\delta)}{N}} +  \frac{2\basecoverage_\infty(\policy)V_{\max}\log (2|\policyclass|/\delta)}{3N},
\end{align*}
where \begin{align*}
    \basecoverage_\infty (\policy)&\coloneqq \sup_{\x\in\xspace,\y\in\yspace} \frac{\policy(\y|\x)}{\basepolicy(\y|\x)}, \ \basecoverage_2(\policy)\coloneqq \mathbb E_{\x \sim \xdist} \mathbb E_{\y\sim \basepolicy} \left(\frac{\policy(\y|\x)}{\basepolicy(\y|\x)}\right)^2.
\end{align*}

The learned policy $\policyhat$ is defined as 
\begin{align}
\label{eq:ipw}
    \policyhat = \argmax_{\policy\in\policyclass} \hatJon(\policy) - b(\pi).
\end{align}

Note that $b(\pi)$ can be evaluated without querying the expert, and therefore does not contribute to the statistical complexity of expert feedback.

We now prove \cref{thm:finite-sample}.

\begin{proof}[Proof of \cref{thm:finite-sample}]
For a policy $\policy$, denote $w^\policy (\x,\y) = \frac{\policy(\y\mid \x)}{\basepolicy(\y\mid \x)}$, $\Zi=w^\policy (\x_i,\y_i)f(\expolicy(\y_i\mid\x_i))$. Then 

\begin{align*}
    \hatJon(\policy)&=\frac 1N \sum_{i=1}^N \Zi,\\
    \Jon(\pi)&=\E[\Zi].
\end{align*}
So
\begin{align*}
    \hatJon(\policy)-\Jon(\policy)=\frac 1 N \sum_{i=1}^N (\Zi -\mathbb E \Zi).
\end{align*}
Since 
\begin{align*}
&|\Zi|\leq w^\policy (\x,\y)|f(\expolicy(\y_i\mid\x_i))|\leq V_{\max} \basecoveragestar(\policy), \\
&\Var(\Zi)\leq \E [(\Zi)^2]=\E _{x\sim \xdist} \E _{y\sim \basepolicy(\cdot \mid \x)} [ w^\policy (\x,\y)^2V_{\max}^2]\leq V_{\max}^2\basecoverage_2(\policy),
\end{align*}
Bernstein inequality gives
\begin{align*}
    \Pr
    \left(
    \left|
    \hatJon(\policy)-\Jon(\policy) 
    \right|
    > b(\pi)
    \right)
    \leq \frac{\delta }{|\policyclass|}.
\end{align*}
By union bound, with probability at least $1-\delta$, it holds for any $\pi\in \policyclass$ that
\begin{align*}
    \left|
    \hatJon(\policy)-\Jon(\policy) 
    \right|
    \leq b(\pi).
\end{align*}

Under this event, 
\begin{align*}
    \hatJon(\policyhat)\leq \Jon (\policyhat) + b(\policyhat),\\
    \Jon(\optimalpolicy)\leq \hatJon(\optimalpolicy) + b(\optimalpolicy).
\end{align*}
Thus
\begin{align*}
    \Jon(\policyhat) \geq \hatJon(\policyhat) - b(\policyhat)\geq \hatJon(\optimalpolicy) - b(\optimalpolicy)\geq \Jon(\optimalpolicy) - 2b(\optimalpolicy).
\end{align*}
\end{proof}

\begin{remark}[Online--Offline Separation under the Structural Conditions]
\label{rem:separation}
We note that the hard instances used in the offline lower bound
(\cref{thm:lowerboundoffline}) can also be chosen to satisfy the additional
conditions required by the online upper bound, including \cref{ass:main} and
the $V_{\max}$-bounded signal condition, by taking a truncated identity shaping
function and choosing $\alpha$ appropriately.

Indeed, consider the hard instance indexed by $s$ in the proof of
\cref{thm:lowerboundoffline}. Recall that for each context $x_i$, the expert
policy is given by
\[
\expolicy_s(y\mid x_i)
=
\begin{cases}
1-\frac{1}{\excoverage}, & y=y_1,\\
\frac{1}{\excoverage}, & y=s_i,\\
0, & \text{otherwise}.
\end{cases}
\]
Take the shaping function $f:\mathbb R\to\mathbb R$ to be the truncated identity
\[
f(p) := \min\left\{p,\frac{1}{\excoverage}\right\}.
\]
Applying this shaping function to the expert probability gives
\[
f(\expolicy_s(y\mid x_i))
=
\begin{cases}
\frac{1}{\excoverage}, & y=y_1,\\
\frac{1}{\excoverage}, & y=s_i,\\
0, & \text{otherwise}.
\end{cases}
\]
Since student policies only choose between $y_2$ and $y_3$, the relevant signal
values are
\[
f(\expolicy_s(s_i\mid x_i))=\frac{1}{\excoverage},
\qquad
f(\expolicy_s(y\mid x_i))=0
\quad \text{for } y\in\{y_2,y_3\}\setminus\{s_i\}.
\]
Thus the signal gap between the optimal student action and the suboptimal
student action is exactly $1/\excoverage$, while the corresponding reward gap is
$1$.

Thus  $f(\expolicy_s(y\mid x))=\frac 1 {\excoverage}r(x,y)$. 
Hence \cref{ass:main} holds with $\alpha=1/\excoverage$ and $\delta(\epsilon)=0$. Moreover, the shaped signal is
$V_{\max}$-bounded with $V_{\max}=1/\excoverage$.

Therefore, converting the online estimation error $\epsilon$ into suboptimality
through $\epsilon/\alpha$ gives
\begin{align*}
1-\val(\policyhat_{\mathrm{on}})
&\lesssim
\frac{1}{\alpha}
\left(
\sqrt{
\frac{
\basecoverage_2(\optimalpolicy) V_{\max}^2
\log(2|\policyclass|/\delta)
}{N}
}
+
\frac{
\basecoverage_\infty(\optimalpolicy) V_{\max}
\log(2|\policyclass|/\delta)
}{N}
\right) \\
&=
\sqrt{
\frac{
\basecoverage_2(\optimalpolicy)
\log(2|\policyclass|/\delta)
}{N}
}
+
\frac{
\basecoverage_\infty(\optimalpolicy)
\log(2|\policyclass|/\delta)
}{N}.
\end{align*}
In particular, after substituting
$\alpha=V_{\max}=1/\excoverage$, the online guarantee no longer scales with the
expert-coverage coefficient $\excoverage$. By contrast,
\cref{thm:lowerboundoffline} shows that any offline IL algorithm must incur
\[
1-\mathbb E[\val(\policyhat)]
\gtrsim
\min\left\{1,\frac{\excoverage \log |\policyclass|}{N}\right\}.
\]
Thus, even within the regime where \cref{ass:main} and the bounded-signal
condition hold, offline IL must pay the expert-coverage dependence, whereas
online IL depends instead on the typically more favorable base-policy coverage
terms. This gives a clean offline--online separation under the structural
conditions.

Although the online IL guarantee contains a slower term
$\sqrt{\basecoverage_2(\optimalpolicy)/N}$, the coverage coefficient appearing
there, $\basecoverage_2(\optimalpolicy)$, can be much smaller than the
worst-case coverage coefficient $\basecoverage_\infty(\optimalpolicy)$.
\end{remark}
\newpage
\section{Additional Experiments}
\label[app]{app:additional_experiments}

\ifneurips
\textbf{Out-of-distribution generalization and catastrophic forgetting.}
\else
\paragraph{Out-of-distribution generalization and catastrophic forgetting.}
\fi
We evaluate the model's out-of-distribution (OOD) generalization on the Countdown task, where test instances use a larger number range than those seen during training. We also assess catastrophic forgetting during GSM8K training by measuring general language capability on the MMLU benchmark \citep{hendryckstest2021}.
\Cref{fig:ood-dist} shows that both SFT and on-policy distillation with a realizable expert achieve OOD performance matching that of the expert, whereas SFT with a non-realizable expert performs substantially worse and exhibits a larger gap between in-distribution and OOD performance.
We observe a similar pattern for forgetting: under SFT with a realizable expert or on-policy distillation, the MMLU score decreases only slightly throughout training, with a final drop within $0.5$ points; in contrast, under a non-realizable expert, it drops more substantially, by more than $2$ points.

These findings refine prior observations that SFT tends to generalize worse OOD \citep{chu2025sft} and forget more than RL \citep{shenfeld2025rlrazor, chen2025retaining}. Instead, our results suggest this gap is not inherent to the algorithm, but arises primarily under non-realizability. It is the training data distribution, rather than the SFT algorithm itself, that drives inferior OOD generalization and severe forgetting.

\ifneurips
\begin{wrapfigure}{r}{0.53\linewidth}
    \vspace{-4mm}
    \centering
    \includegraphics[width=\linewidth]{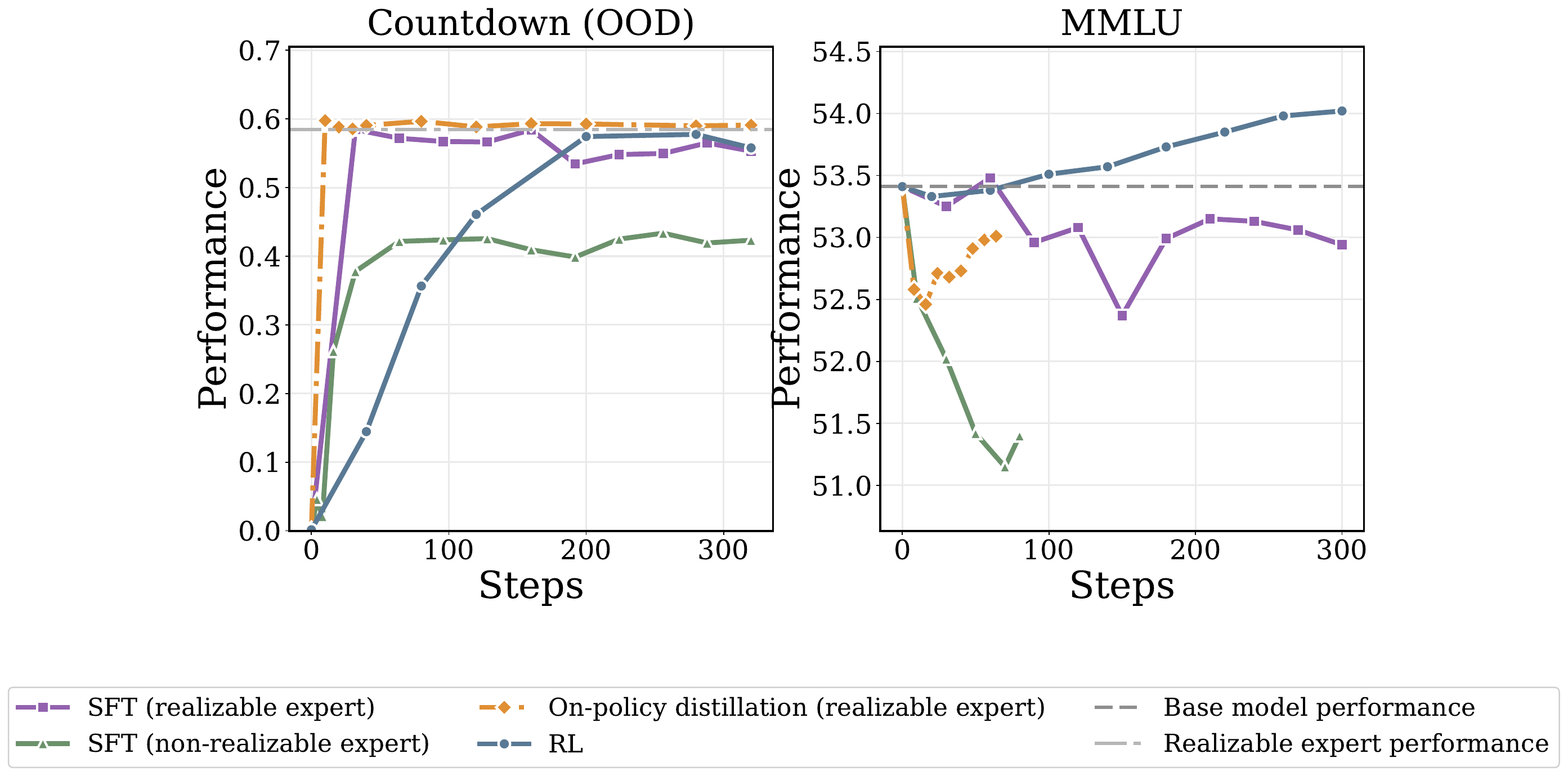}
    \caption{\small\emph{\textbf{OOD generalization and catastrophic forgetting.}
Left: Evaluation on Countdown instances with larger number ranges than training. Right: Forgetting during GSM8K training, measured on MMLU. Overall, SFT from a realizable expert achieves strong OOD performance while exhibiting little to no MMLU degradation, whereas SFT from a non-realizable expert harms OOD generalization during training and leads to severe forgetting.}}
    \label{fig:ood-dist}
    \vspace{-6mm}
\end{wrapfigure}
\else
\begin{figure}[h]
    \centering
    \includegraphics[width=0.7\linewidth]{ICML/fig/mmlu_countdown_combined.pdf}
    \caption{\textbf{OOD generalization and catastrophic forgetting.} Left: Models are evaluated on Countdown instances with larger number range. Right: Catastrophic forgetting during GSM8K training. Models are evaluated on MMLU benchmark. 
    Both on-policy distillation and SFT from a realizable expert achieve strong OOD performance that matches the expert, whereas SFT from a non-realizable expert yields much worse OOD generalization and exhibits more severe forgetting.}
    \label{fig:ood-dist}
\end{figure}
\fi

\section{Experimental Details}
\label[app]{app:experiment_details}
\subsection{Countdown Task}
\label[app]{sec:appendix_experiment_detail_countdown}

\begin{responsebox}{Countdown Task Example}
\fontfamily{\sfdefault}\selectfont 

\textbf{[INST]} Using the numbers [5, 94, 9, 44], create an equation that equals 93. You can use basic arithmetic operations (+, -, *, /) and each number can only be used once. Show your work in \verb|<think>| \verb|</think>| tags. And return the final answer in \verb|<answer>| \verb|</answer>| tags, for example \verb|<answer>|(1 + 2) / 3\verb|</answer>|. \textbf{[/INST]}

Let me solve this step by step.
\end{responsebox}

\paragraph{Experimental Setup.}

We utilize the dataset introduced by \citet{tinyzero}, which comprises $327{,}680$ training instances and $1{,}024$ test samples.\footnote{\url{https://huggingface.co/datasets/Jiayi-Pan/Countdown-Tasks-3to4}} To illustrate the task format, we provide a representative training prompt.

Our implementation is built upon the official codebase of \citet{tinyzero},\footnote{\url{https://github.com/Jiayi-Pan/TinyZero}} incorporating specific adaptations for NVIDIA A100 execution from a community fork.\footnote{\url{https://github.com/JerryWu-code/TinyZero}}

\paragraph{RL Training Details.}

We follow the training setup of \citet{tinyzero} and provide the details below for completeness.
We fine-tune the \texttt{Qwen2.5-3B-Instruct} model \citep{team2024qwen2} using Reinforcement Learning (RL) for $320$ steps on $2$ NVIDIA A100 GPUs. The training configuration employs a global batch size of $128$ with $5$ rollouts per prompt, and a mini-batch size of $64$ for optimization. We set the learning rate to $1\times10^{-6}$ and the KL penalty coefficient to $\beta_{\mathrm{KL}}=1\times10^{-3}$. The maximum generation length is capped at $1024$ tokens. We use a reward function that assigns $1.0$ to correct responses, $0.1$ to incorrect but format-compliant responses, and $0$ to invalid outputs.



\begin{table}[h]
\caption{RL training details for the Countdown task.}
\centering
\renewcommand{\arraystretch}{1.15}
\setlength{\tabcolsep}{14pt}
\small

\begin{tabular}{ll}
\toprule
\textbf{Parameter} & \textbf{Value} \\
\midrule
Max prompt length & 2,048 \\
Clip ratio & 0.2 \\
Validation temperature & 1 \\
Rollout temperature & 1 \\
Validation top-$p$ & 0.7 \\
Validation top-$k$ & 50 \\
\bottomrule
\end{tabular}
\end{table}



\paragraph{SFT Training Details.}

For training with both realizable and non-realizable experts, we use the same SFT configuration.
We perform one-pass supervised fine-tuning with LlamaFactory on \texttt{Qwen2.5-3B} using full-parameter training.
The model is trained for $3{,}200$ optimization steps with a maximum sequence length of $2{,}048$ tokens.
We use a per-device batch size of $16$, a learning rate of $1\times10^{-5}$, a cosine learning rate schedule, and a warmup ratio of $0.1$. For each question, we generate $5$ different solutions, matching our RL setup, where $5$ rollouts are generated for each question. For SFT with a realizable (RL-trained) expert, we roll out the expert with temperature $1.0$ and disable both top-$k$ and top-$p$ filtering.

For the GSM8K and DeepScaleR experiments, we use the same SFT configuration, except that the batch size and maximum sequence length are matched to the corresponding RL setting. We therefore do not repeat these details separately.

One RL training step consists of multiple gradient updates.
In \cref{fig:in-dist}, we report progress in terms of RL steps.
To align SFT with RL on the horizontal axis, we rescale SFT steps by a constant factor
(/10 for Countdown and /16 for GSM8K and DeepScaleR).


  

    
    

\paragraph{On-Policy Distillation Training Details.}
  We use TRL's MiniLLM trainer \citep{vonwerra2020trl, gu2023minillm}.
  We train for $3{,}200$ optimization steps on 8 GPUs.
  We use a per-device batch size of $4$ and a global batch size of $64$, which corresponds to gradient accumulation over 2 steps.
  For each prompt, the student samples 4 completions on-policy.
  The maximum prompt length is $256$ tokens and the maximum completion length is $1{,}024$ tokens.
  We use a learning rate of $1\times10^{-5}$.
  Unless otherwise noted, other optimization settings follow the default TRL MiniLLM configuration, including AdamW, a linear learning-rate schedule with no warmup, and dropout disabled.

For the GSM8K experiments, we use a learning rate of $5\times 10^{-6}$, while for DeepScaleR we use $1\times 10^{-5}$. 
The sequence length in each setup is matched to that used in the corresponding RL training. 
All other configurations follow the Countdown setup.

\paragraph{Evaluation Details.}

During evaluation, we sample with temperature $1.0$, top-$p$ $0.7$, and top-$k$ $50$.
For each test example, we generate $8$ responses and report the averaged accuracy.
We set the maximum generation length to $1024$ tokens.

\paragraph{OOD Evaluation Task.}
In Section~\ref{sec:compare_realizable} and \cref{app:additional_experiments}, we evaluate both in-distribution (ID) and out-of-distribution (OOD) performance for the Countdown task. For OOD evaluation, we use instances with four operands only, with larger input and answer ranges.

  

    
    

\begin{table}[h]
\caption{Detailed settings for Countdown tasks used in ID and OOD evaluation.}
\label{tab:setup_comparison}
\centering
\renewcommand{\arraystretch}{1.1}
\setlength{\tabcolsep}{8pt}
\small

\begin{tabular}{lccc}
\toprule
\textbf{Setting} & \textbf{Number of Operands} & \textbf{Range of Input} & \textbf{Range of Answer} \\
\midrule
Train / ID & 3--4 & 100 & 100 \\
OOD        & 4    & 125 & 300 \\
\bottomrule
\end{tabular}
\end{table}






\subsection{GSM8K Task}
\label[app]{sec:appendix_experiment_detail_gsm8k}

\paragraph{Experimental Setup.}

We conduct experiments on the GSM8K benchmark \citep{cobbe2021training}, using the original train/test split, which contains approximately $7.47$K training examples and $1.32$K test examples. We use \texttt{Llama-3.2-2B} \citep{grattafiori2024llama} as the base model. 

\paragraph{Warmup Supervised Fine-Tuning.}

To enable RL training, we first perform supervised fine-tuning on the OpenR1 dataset \citep{face2025open} to strengthen the model's reasoning capabilities. We use the LlamaFactory \citep{zheng2024llamafactory} framework and train for $3{,}064$ steps with batch size $64$ and learning rate $10^{-5}$ using the AdamW optimizer.

\paragraph{RL Training Details.}

We then perform RL training on the GSM8K training split for $300$ steps using the DeepScaleR codebase \citep{luo2025deepscaler}, which is built upon \texttt{verl} \citep{sheng2024hybridflow}.  A response receives reward $1$ if its extracted final answer exactly matches the ground-truth answer, and $0$ otherwise.

\begin{table}[h]
\caption{RL training details for GSM8K experiments.}
\label{tab:gsm8k_rl_details}
\centering
\renewcommand{\arraystretch}{1.15}
\setlength{\tabcolsep}{14pt}
\small

\begin{tabular}{ll}
\toprule
\textbf{Parameter} & \textbf{Value} \\
\midrule
Train batch size & 128 \\
PPO mini-batch size & 64 \\
Max prompt length & 512 \\
Max response length & 4,096 \\
Rollout temperature & 0.6 \\
Rollout top-$k$ & 50 \\
Rollout top-$p$ & 0.95 \\
Responses per prompt & 8 \\
KL coefficient & 0.001 \\
\bottomrule
\end{tabular}
\end{table}


\paragraph{Evaluation Details.}

During evaluation, we sample with temperature $0.6$, top-$p$ $0.95$, and top-$k$ $50$. For MMLU evaluation, we use temperature $0.95$, top-$p$ $0.7$, and top-$k$ $50$, and report the category-averaged MMLU accuracy.

\subsection{DeepScaleR Task}

\paragraph{Experimental Setup.}
We follow the setup of DeepScaleR \citep{luo2025deepscaler} training with their 40k dataset, and evaluating with MATH \citep{hendrycks2021measuring} and AIME2024 and AIME 2025 benchmark. We use DeepSeek-R1-Distill-Qwen-1.5B \citep{deepseekai2025deepseekr1incentivizingreasoningcapability} as the base model.

\paragraph{RL Training Details.}

  We perform RL training using the DeepScaleR codebase \citep{luo2025deepscaler}, which is built on top of \texttt{verl}
  \citep{sheng2024hybridflow}. We follow the training configuration of the first stage (8k context length) of \citet{luo2025deepscaler}. Specifically, for each prompt, we sample 8
  responses with temperature 0.6. The maximum prompt length is 1{,}024 tokens and the maximum response length is 8{,}192 tokens. We use a
  train batch size of 128, a PPO mini-batch size of 64, an actor learning rate of $1\times10^{-6}$, and a KL coefficient of 0.001. We use binary reward.


\paragraph{Evaluation Details.}

During evaluation, we sample with temperature $0.6$, top-$p$ $0.95$, and top-$k$ $50$. 

\ifneurips
\subsection{Computation Resources}

We conduct experiments using A100 and H100 GPUs. The DeepScaleR experiments require 8 A100 GPUs for approximately one week. All other experiments are completed within two days.
\else
\fi

\subsection{Experimental Details for \cref{sec:distance_insufficient}}
\label{app:detail_distance}
For both models, we generate responses with temperature $0.6$, top-$p$ $0.95$, and a maximum generation length of $100{,}000$ tokens. We evaluate both models on the AIME 2024 and AIME 2025 benchmarks. For each benchmark, each model generates 14 responses per question, yielding 840 responses per model in total. The response length is measured using the Qwen3-8B tokenizer for both models. 

\subsection{Experimental Details for \cref{sec:onlineefficient}}
\label{app:detail_onlineil_condition}

\paragraph{Training Setup.}

  We reproduce the DeepSeek-family setting in Figure~4 (left) of \citet{li2026rethinkingopd}. For completeness, we repeat the experimental details below. We
  use \texttt{DeepSeek-R1-Distill-Qwen-1.5B} as the student model. We consider two expert policies: \texttt{DeepSeek-R1-Distill-Qwen-7B} and \texttt{Skywork-OR1-Math-7B}. We train on
  the \texttt{DAPO-Math-17K} dataset \citep{yu2025dapo} for 270 steps and evaluate performance on \texttt{AIME24} and \texttt{AIME25}.

  We use the OPD training pipeline implemented with \texttt{verl}, using the PPO trainer in the \texttt{verl} \citep{sheng2024hybridflow} codebase
  together with \texttt{vLLM} \citep{kwon2023efficient} for rollout generation. We disable KL regularization
  and use token-mean loss aggregation.

  Both runs share the same optimization hyperparameters and differ only in the expert model. We train with learning
  rate $10^{-6}$ for $270$ steps on $8$ GPUs. The rollout temperature is $1.0$, and we sample $4$ responses per prompt
  during training. We use top-$k$ truncation with $k=16$.
  We use a maximum prompt length of $1024$, a maximum
  response length of $7168$, and a maximum generation length of $31744$ for validation.

\begin{figure}
    \centering
    \includegraphics[width=0.45\linewidth]{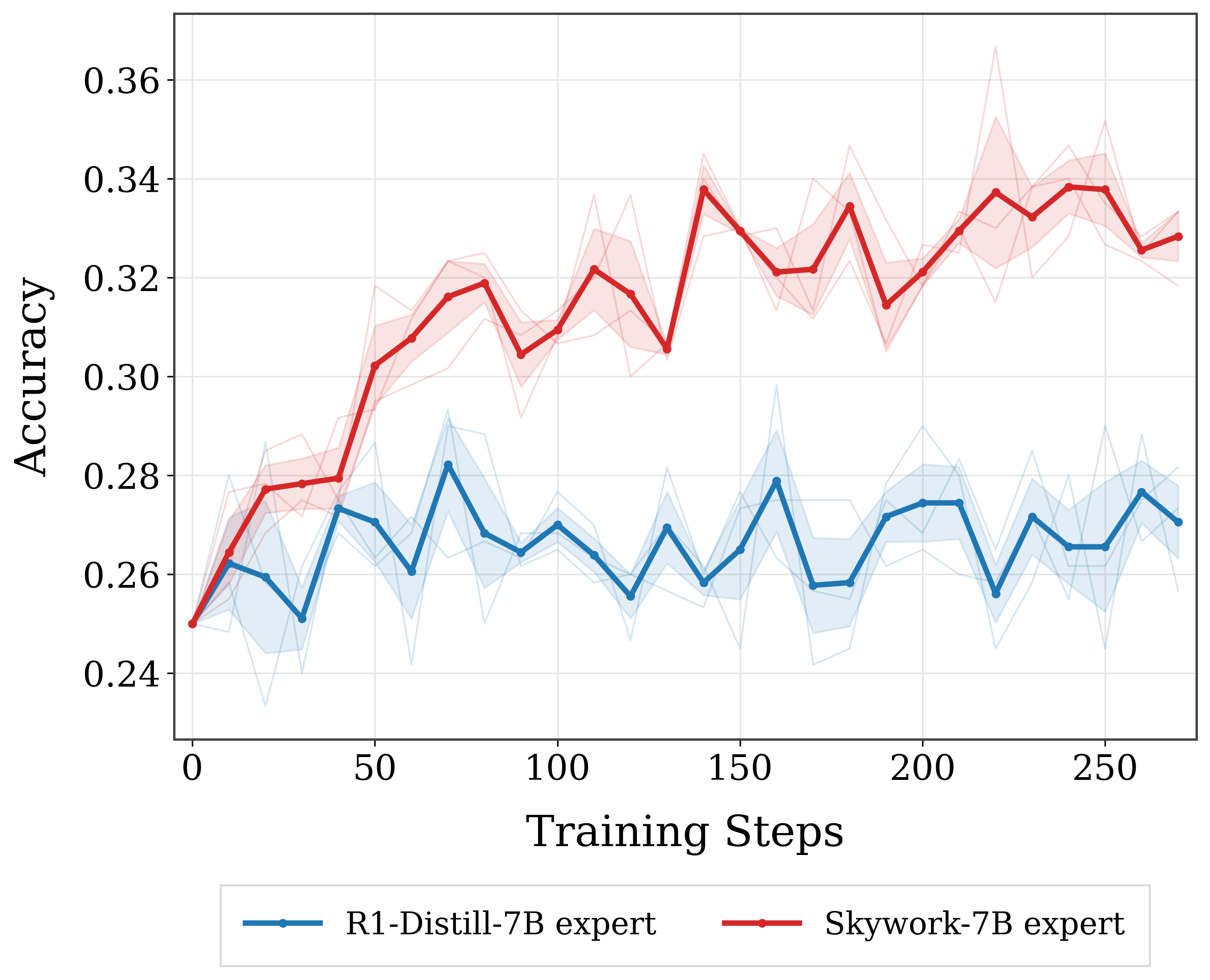}
    \caption{\textbf{The on-policy distillation results in \cref{sec:onlineefficient}.}
Starting from the same base model, DeepSeek-R1-Distill-Qwen-1.5B, we perform on-policy distillation using two 7B experts: R1-Distill-7B and Skywork-7B.
Distillation from Skywork-7B improves average performance on AIME 2024 and 2025 from $25.0\%$ to $32.8\%$, whereas distillation from R1-Distill-7B improves it to $27.1\%$ (averaged over three runs).}
    \label{fig:placeholder}
\end{figure}

\begin{table}[t]
\centering
\caption{Empirical estimation of quantities in \Cref{ass:main}.}
\label{tab:empirical_alignment_estimation}
\begin{tabular}{@{}p{0.55\linewidth}p{0.45\linewidth}@{}}
\toprule
\textbf{Quantity} & \textbf{Empirical proxy} \\
\midrule

$\val(\pi)
=
\E_{\x}
\E_{\y\sim\pi(\cdot\mid\x)}
r(\x,\y)$s
&
$\widehat V(\pi)
=
\frac{1}{M}
\sum_{i=1}^M r(\x_i,\y_i)$
\\[1mm]

$\Jon(\pi)
=
\E_{\x}
\E_{\y\sim\pi(\cdot\mid\x)}
f(\expolicy(\y\mid\x))$
&
$\widehat J_{\mathrm{on}}(\pi)
=
\frac{1}{M}
\sum_{i=1}^M \widehat f(\x_i,\y_i)$
\\[1mm]
policy class $\Pi$
& Collected 162 checkpoints $\widehat \Pi$
\\[1mm]

$\epsilon$-optimal policies w.r.t.\ $\Jon$
&
$\widehat{\Pi}_\epsilon:=
\left\{
\pi \in \widehat{\Pi}:
\widehat{J}_{\mathrm{on}}(\pi)
\ge
\max _{\pi'\in {\widehat \Pi}}\widehat{J}_{\mathrm{on}}(\pi' )-\epsilon
\right\}
$
\\[1mm]

Reference policy $\optimalpolicy$
&
$\hat\pi^\star
=
\arg\max_{\pi\in\widehat \Pi }
\widehat V(\pi)$
\\
\bottomrule
\end{tabular}
\vspace{-3mm}
\end{table}

\paragraph{Estimating the suboptimality bound.}


We perform three distillation runs for each expert and collect checkpoints every 10 steps, obtaining 162 checkpoints in total. We denote this checkpoint set by $\widehat \Pi$.  For each checkpoint policy $\widehat \Pi$, we sample $M=600$ responses on AIME 2024 and AIME 2025.

Each sample consists of a benchmark prompt $x_i$ and a response $y_i\sim\pi(\cdot\mid x_i)$.
The reward value is estimated over these $M$ responses, i.e., 
\[
\widehat V(\pi)
\coloneqq
\frac{1}{M}\sum_{i=1}^M r(x_i,y_i),
\]
where $r(x_i,y_i)$ is the correctness reward.
For the expert score, we use the expert log-probability, i.e. $f(\expolicy(\y\mid \x) ) = 
\log \expolicy(\y\mid \x)$, as is commonly done in on-policy distillation for training stability.
This gives
\[
\widehat J_{\mathrm{on}}(\pi)
\coloneqq
\frac{1}{M}\sum_{i=1}^M \log \expolicy(\y\mid \x).
\]
The resulting estimates are summarized in \Cref{tab:empirical_alignment_estimation}.


We estimate the suboptimality bound in \Cref{thm:main} separately for each expert.
To simplify notation, we omit the expert index in the estimators below.
Let
$
\widehat\pi^\star
\in
\argmax_{\pi\in \widehat \Pi}
\widehat V(\pi)
$
be the empirically best checkpoint policy in terms of reward value.
Using the decomposition in \Cref{eq:f_decomposition}, for a fixed $\alpha>0$ we define the empirical residual
\[
\widehat H_{\alpha}(\pi)
\coloneqq
\widehat J_{\mathrm{on}}(\pi)
-
\alpha \widehat V(\pi)
\]
for some $\alpha$ to be decided later.
The action-independent offset $b(x)$ cancels when comparing two policies, and therefore does not need to be estimated.
We then estimate the residual variation over $\widehat \Pi_\epsilon$:
\[
\widehat\delta_{\alpha}
\coloneqq
\sup_{\pi\in\widehat \Pi_\epsilon}
\left[
\widehat H_{\alpha}(\pi)
-
\widehat H_{\alpha}(\widehat\pi^\star)
\right].
\]
Finally, we report the estimated suboptimality bound
$
\widehat\Delta(\epsilon)
\coloneqq
\min_{\alpha>0}
(\epsilon+\widehat\delta_{\alpha})/\alpha.
$

\end{document}